\newtheorem{theorem}{Theorem}
\newenvironment{manuallemma}[1]{%
  \manuallemmainner
}{\endmanuallemmainner}
\title{Predictive Coding Can Do Exact Backpropagation on Convolutional 
and Recurrent Neural Networks}
\author{
Tommaso Salvatori\thanks{Equal contributors} \\
Department of Computer Science\\
University of Oxford, UK\\
\texttt{tommaso.salvatori@cs.ox.ac.uk} \\
\And
Yuhang Song\footnotemark[1] \\
Department of Computer Science\\
University of Oxford, UK\\
\texttt{yuhang.song@some.ox.ac.uk} \\
\And
Thomas Lukasiewicz \\
Department of Computer Science\\
University of Oxford, UK\\
\texttt{thomas.lukasiewicz@cs.ox.ac.uk}\\
\And
Rafal Bogacz \\
MRC Brain Network Dynamics Unit \\
University of Oxford, UK \\
\texttt{rafal.bogacz@ndcn.ox.ac.uk}
\And
Zhenghua Xu \\
State Key Laboratory of Reliability and Intelligence of Electrical Equipment \\
Hebei University of Technology, Tianjin, China \\
\texttt{zhenghua.xu@hebut.edu.cn} \\
}
\begin{document}
\maketitle

\begin{abstract}
Predictive coding networks (PCNs) are an influential model for information processing in the brain. They have appealing theoretical interpretations and offer a single mechanism that accounts for diverse perceptual phenomena of the brain. On the other hand, \emph{backpropagation} (\emph{BP}) is commonly regarded to be the most successful learning method in  modern machine learning. Thus, it is exciting that recent work formulates \emph{inference learning} (\emph{IL}) that trains PCNs to approximate BP. However, there are several remaining critical issues: (i) IL is an approximation to BP with unrealistic/non-trivial requirements, (ii) IL approximates BP in single-step weight updates; whether it leads to the same point as BP after the weight updates are conducted for more steps is unknown, and (iii) IL is computationally significantly more costly than BP. To solve these issues, a variant of IL that  is strictly equivalent to BP in fully connected networks has been proposed.  In this work, we build on this result by showing that it also holds for more complex architectures, namely,  convolutional neural networks and (many-to-one) recurrent neural networks. To  our knowledge, we are the first to show that a biologically plausible algorithm is able to exactly replicate the accuracy of BP on such complex  architectures, bridging the existing gap between IL and BP,  and setting an unprecedented performance for PCNs, which can now be considered as efficient alternatives to~BP.
\end{abstract}

\keywords{Cognitive Science \and Deep Learning \and Computational Neuroscience}

\section{Introduction}

\emph{Predictive coding networks} (\emph{PCNs}) are an influential model for describing information processing in the brain~\cite{rao1999predictive}.
They have appealing theoretical interpretations, such as via free-energy minimization \cite{bogacz2017tutorial,friston2003learning,friston2005theory,whittington2019theories} and probabilistic models \cite{whittington2017approximation},
and offer a single mechanism that accounts for diverse perceptual phenomena observed in the brain, such as repetition-suppression \cite{auksztulewicz2016repetition}, illusory motions \cite{lotter2016deep,watanabe2018illusory}, bistable perception \cite{hohwy2008predictive,weilnhammer2017predictive}, and attentional modulation of neural activity \cite{feldman2010attention,kanai2015cerebral}.
Though PCNs were originally proposed for unsupervised learning patterns of the brain 
 \cite{rao1999predictive}, they were later also found to be usable
for supervised learning: in a number of experiments, 
 multi-layer perceptrons with non-linear activations were shown to be able to approximate backpropagation (BP) \cite{werbos1974new,rumelhart1986learning} and do supervised learning at least as good as BP \cite{whittington2017approximation}.
Supervised PCN's learning works briefly as follows: in a PCN, (i) run an optimization process (called \emph{inference}) until convergence, so that prediction errors will be propagated across neurons, like the error term is backpropagated by gradient descent, and (ii) update weights to minimize the propagated prediction errors.
Being based on a special process of inference, such learning is also called  \emph{inference learning}~(\emph{IL}).
By recent work \cite{millidge2020predictive}, IL~can approximate BP also for more complex network structures, along arbitrary computation graphs. 
Seeing that IL (as a plausible model for  learning in the brain) can  approximate BP  (as the most successful learning method in modern machine learning) is of crucial importance for both the machine learning and the neuroscience~\mbox{community}.

However, despite such progress on the relationship between IL and BP, there are still some critical issues regarding complex architectures, as for IL's approximation to BP to hold, inference needs to have converged to an equilibrium \cite{millidge2020predictive}. This requirement is non-trivial: the convergence of inference (especially in practical non-linear models) is not easy empirically, nor is it guaranteed theoretically.
Moreover, even if the above two requirements are satisfied, this approximation only holds for a single step of weight update. Whether there exists an approximation of the error that converges to a bounded value after an infinite number of steps of weights updates is unknown.
Furthermore, inference in IL is also computationally substantially more expensive: IL's approximation to BP at a single step of a weights update requires inference to be conducted for sufficiently many  steps until convergence. For example, suppose that inference converges after $T\,{=}\,128$ steps (which is the common case for a model with two  layers), then IL is around $128$ times slower than BP, as one inference iteration is computationally about as costly as one iteration of BP \cite{millidge2020predictive}.

To solve these problems and bridge the gap between IL and BP, a variant of IL, called Z-IL, has been shown to perform the \emph{exact} same weights update of BP on fully connected networks \cite{song2020can}. 
This work, however, does not show whether this result carries over to complex architectures, such as convolutional and recurrent neural networks (CNNs and RNNs), which are the most used ones in practice.

In the present work, we positively answer this open question. We define the equations of IL for convolutional architectures and many-to-one recurrent networks, and mathematically prove that Z-IL is able to exactly replicate the dynamics of BP on these two models. In \cite{Lillicrap20}, the authors claim that no one in the machine-learning community has been able to train high-performing deep networks on difficult tasks such as classifying the objects in ImageNet photos using any algorithm other than BP. In this work, we thus close this gap. In fact, we show that Z-IL is, to our knowledge, the first biologically-inspired learning algorithm that is able to obtain an accuracy as good as the one obtained by BP on complex tasks such as large-scale image recognition on convolutional networks. 
We also analyze theoretically and experimentally 
 the running time of 
this learning algorithm on  CNNs and RNNs. Particularly, 
 we train multiple CNNs and RNNs on different tasks and datasets, and show that Z-IL can be up to orders of magnitude faster compared to IL, while only showing a little overhead compared to BP. Hence, the conclusion that Z-IL is significantly cheaper than~IL holds regardless of the complexity of the model.

Our work will be of significant importance for the machine learning community, as it shows that exact BP can indeed be implemented in complex architectures present in the brain. On the other hand, our work
will also be  highly benefitial for the neuroscience community, as activity patterns observed in the cortex may now be studied using BP, which has been criticized previously due to its biological implausibility.

The main contributions of this paper are briefly as follows. 
\begin{itemize}
    \item We define convolutional and recurrent versions of the IL and Z-IL algorithms, thus encoding these two architectures in the framework of predictive coding.
    \item We mathematically prove that the update rules of BP and Z-IL are equivalent in convolutional and recurrent networks, generalizing the existing result for fully connected networks \cite{song2020can}. Therefore, PCNs match the performance of BP on complex models.
    \item We theoretically and experimentally analyze the running time of Z-IL, IL and BP on CNNs and RNNs. We show that Z-IL is not only equivalent to BP in terms of performance, but also comparable in terms of efficiency. Furthermore, it is several orders of magnitude faster than~IL.
\end{itemize}

\section{Preliminaries}

We now highlight the similarities and differences between artificial neural networks (ANNs) trained with BP, and PCNs  trained with IL.
Both models have a \emph{prediction} and a \emph{learning} stage. The former has no supervision signal, and the goal is to use the current parameters to make a prediction. During learning, the  supervision signal is present, and the goal is to update the current parameters.
To make the dimension of variables explicit, we denote vectors  with a bar (e.g.,  $\overline{x}\,{=}\,(x_{1},x_{2},\ldots, x_n)$ and $\overline{0}\,{=}\,(0,0,\ldots,0)$).


\begin{table*}[t]
  \caption{Notation for ANNs and PCNs.}
  \label{tb:notations}
  \vspace*{0.5ex}\centering
  \resizebox{1.0\textwidth}{!}{
    \begin{tabular}{ccccccccccccccc}
    \toprule
    \cmidrule(r){1-13}
    \rotatebox[origin=c]{0}{} & \rotatebox[origin=c]{90}{\shortstack{\hspace*{0ex}Value-node\hspace*{-1ex}\\ \hspace*{0ex}activity\hspace*{-1ex}}} & \rotatebox[origin=c]{90}{\shortstack{\hspace*{-2ex}Predicted\hspace*{-1ex}\\\hspace*{-2ex}value-node\hspace*{-1ex}\\\hspace*{-1ex}activity\hspace*{-1ex}}} & \rotatebox[origin=c]{90}{\shortstack{Error term\hspace*{-1ex}\\ or node\hspace*{-1ex}}} & \rotatebox[origin=c]{90}{\shortstack{\hspace*{3ex}Weight}} & \rotatebox[origin=c]{90}{\shortstack{Objective\hspace*{-1ex}\\function\hspace*{-1ex}}} & 
    \rotatebox[origin=c]{90}{\shortstack{Activation\hspace*{-1ex}\\function\hspace*{-1ex}}} & \rotatebox[origin=c]{90}{\shortstack{\hspace*{2ex}Layer size\hspace*{-1ex}}} & \rotatebox[origin=c]{90}{\shortstack{Number of\hspace*{-1ex}\\layers\hspace*{-1ex}}} & \rotatebox[origin=c]{90}{\shortstack{Input\hspace*{-1ex}\\signal\hspace*{-1ex}}} & \rotatebox[origin=c]{90}{\shortstack{Supervision\hspace*{-1ex}\\signal\hspace*{-1ex}}} & \rotatebox[origin=c]{90}{\shortstack{Learning rate\ \hspace*{-1ex}\\for weights\hspace*{-1ex}}} &
    \rotatebox[origin=c]{90}{\shortstack{Convolution-\hspace*{-1ex}\\ al kernel\hspace*{-1ex}}} &
    \rotatebox[origin=c]{90}{\shortstack{\hspace*{-2ex}Integration\hspace*{0ex}\\ \hspace*{-2ex}step for\hspace*{0ex}\\ \hspace*{-2ex}inference\hspace*{0ex}}}  \vspace*{-0.75ex}\\
    \midrule
    {ANNs} & $y_{i}^{\scriptscriptstyle {l}}$ & -- & $\delta_{i}^{\scriptscriptstyle {l}}$ & $w_{i,j}^{\scriptscriptstyle {l}}$ & $E$ & 
    \multirow{2}{*}{$f$} & \multirow{2}{*}{$n^{\scriptscriptstyle {l}}$} & \multirow{2}{*}{$l_{\text{max}}$} & \multirow{2}{*}{$s^{\text{in}}_{i}$} & \multirow{2}{*}{$s^{\text{out}}_{i}$} & \multirow{2}{*}{$\alpha$} & $\rho$ & -- \\
    {PCNs} & $x^{\scriptscriptstyle {l}}_{i,t}$ & $\mu^{\scriptscriptstyle {l}}_{i,t}$ & $\varepsilon^{\scriptscriptstyle {l}}_{i,t}$ & $\theta_{i,j}^{\scriptscriptstyle {l}}$ & $F_{t}$ 
    &  &  &  &  &  & & $\lambda$ & $\gamma$ \\
    \bottomrule
  \end{tabular}\vspace*{-1ex}
  }
\end{table*}


\subsection{ANNs trained with BP}

ANNs consist of a sequence of layers \cite{rumelhart1986learning}.
Following~\cite{whittington2017approximation}, we invert the usual notation used to number the layers. Particularly, we index the output layer by $0$ and the input layer by $l_{\text{max}}$.
Furthermore, we call~$y_{i}^{\scriptscriptstyle {l}}$ the input to the $i$-th node in the $l$-th layer. Hence, we have
\begin{equation}
y^{\scriptscriptstyle {l}}_{i} = {\textstyle\sum}_{j=1}^{n^{\scriptscriptstyle {l+1}}} w^{\scriptscriptstyle {l+1}}_{i,j} f ( y^{\scriptscriptstyle {l+1}}_{j} ), 
\label{eq:bp-forward}
\end{equation}
where $f$ is the activation function, $w^{\scriptscriptstyle {l+1}}_{i,j}$ is the weight that connects the $j$-{th} node in the $(l\,{+}\,1)$-{th} layer to the $i$-{th} node in the $l$-{th} layer, and $n^{\scriptscriptstyle {l}}$ is the number of neurons in layer~$l$.
To make the notation lighter, we avoid considering bias values.  
However, in the supplementary material, we show that every result of this work extends to the more general case of networks with bias values.

The prediction phase works as follows: given an input vector
$\overline{s}^{\text{in}}= (s^{\text{in}}_{1},\ldots,s^{\text{in}}_{n^{l_{\text{max}}}})$, every node of the input layer is set to the corresponding node of $\overline{s}^{\text{in}}$. Every $ y^{\scriptscriptstyle {l}}_{i}$ is then computed iteratively according to Eq.~\eqref{eq:bp-forward}.

\smallskip 
\noindent\textbf{Learning: }%
Given 
a pair $(\overline{s}^{\text{in}},\overline{s}^{\text{out}})$ from the training set,  
$\overline{y}^{\scriptscriptstyle 0}= (y^{\scriptscriptstyle 0}_{1},\ldots,y^{\scriptscriptstyle 0}_{n^{\scriptscriptstyle 0}})$ is computed via Eq.~\eqref{eq:bp-forward} from $\overline{s}^{\text{in}}$
as input and compared with  $\overline{s}^{\text{out}}$ via 
the following objective function $E$:
\begin{equation}
E = \mbox{$\frac{1}{2}$} {\textstyle\sum}_{i=1}^{n^{\scriptscriptstyle {0}}} ( s^{\text{out}}_{i} - y^{\scriptscriptstyle {0}}_{i} ) ^{2}.
\label{eq:bp-error}
\end{equation}
Hence, the weight update performed by BP is
\begin{equation}
\Delta w^{\scriptscriptstyle {l+1}}_{i,j} 
= -\alpha\cdot {\partial E}/{\partial w^{\scriptscriptstyle {l+1}}_{i,j}}
=\alpha\cdot \delta^{\scriptscriptstyle {l}}_{i}
f ( y^{\scriptscriptstyle {l+1}}_{j} ), 
\label{eq:bp-update-param}
\end{equation}
where $\alpha$ is the learning rate, and $\delta^{\scriptscriptstyle {l}}_{i} = {\partial E}/{\partial {y}^{\scriptscriptstyle {l}}_{i}}$ is the \textit{error term}, 
given as follows: 
\begin{equation}
\delta^{\scriptscriptstyle {l}}_{i} = \begin{cases} 
s^{\text{out}}_{i} - y^{\scriptscriptstyle {0}}_{i} & \mbox{\!\!if } l = 0\,;\\
f' ( y^{\scriptscriptstyle {l}}_{i} ) {\textstyle\sum}_{k=1}^{n^{\scriptscriptstyle {l-1}}} \delta^{\scriptscriptstyle {l-1}}_{k} w^{\scriptscriptstyle {l}}_{k,i} & \mbox{\!\!if } l \in \lbrace 1,\ldots, l_{\text{max}}{-}1 \rbrace\,.\!\!\!
\end{cases}
\label{eq:delta-recursive}
\end{equation}

\subsection{PCNs trained with IL}

\begin{algorithm}[t]
    \caption{Learning one training pair $(\overline{s}^{\text{in}},\overline{s}^{\text{out}})$ with IL}\label{algo:IL}
    \begin{algorithmic}[1]
    \REQUIRE $\overline{x}^{\scriptscriptstyle {l_{\text{max}}}}_{0}$ is fixed to $\overline{s}^{\text{in}}$; $\overline{x}^{\scriptscriptstyle {0}}_{0}$ is fixed to $\overline{s}^{\text{out}}$.
    \FOR{$t=0$ to $T$ (included)}
        \FOR{each neuron $i$ in each level $l$}
            \STATE Update $x^{\scriptscriptstyle {l}}_{i,t}$ to minimize $F_{t}$ via Eq.~\eqref{eq:pcn-dotx-il}\
            \IF{$t= T$}
                \STATE Update each  $\theta^{\scriptscriptstyle {l+1}}_{i,j}$ to minimize $F_{t}$ 
                via Eq.~\eqref{eq:pcn-update-param} \ \  
                {
                }
            \ENDIF
        \ENDFOR
    \ENDFOR
    \end{algorithmic}
\end{algorithm}

\begin{figure}
\centering
	\includegraphics[width=0.8\textwidth]{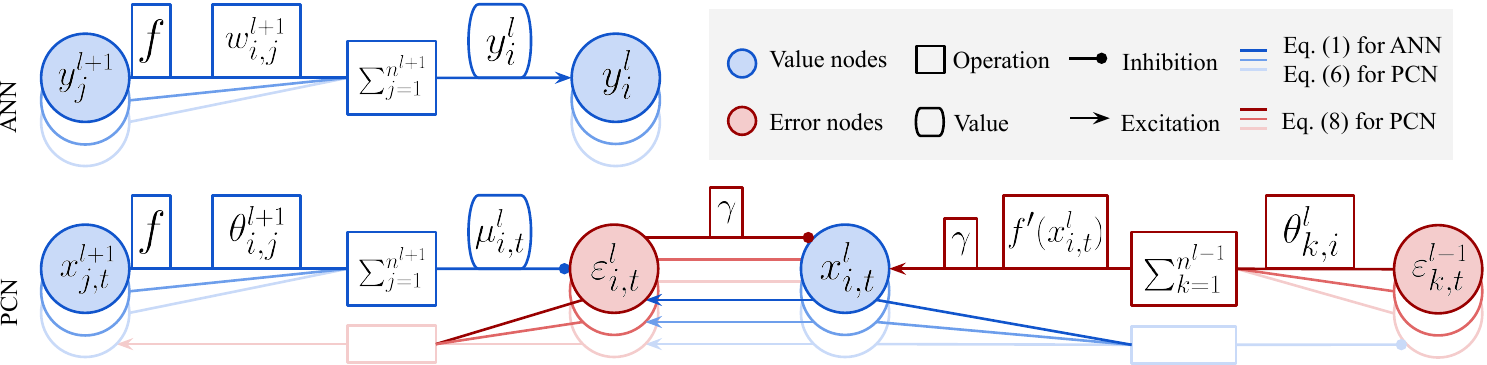}
 	\vspace*{-2ex}
	\caption{ANNs and PCNs trained with BP and IL, respectively.
    }\vspace*{-2ex}
	\label{fig:pcn-ann}
\end{figure}

PCNs  are a widely used model of information processing in the brain, originally developed for unsupervised learning.
It has recently been shown that, when a PCN is used for supervised learning, it closely approximates BP \cite{whittington2017approximation}.
It involves inferring the values of hidden nodes for a time $T$ before doing the weight update, hence the name IL. We call $t$ the time axis during inference.
A PCN contains value nodes, which are each associated with a corresponding prediction-error node.  
Differently from ANNs, which propagate the activity between value nodes directly, PCNs propagate the activity between value nodes $x^{\scriptscriptstyle {l}}_{i,t}$ via the error nodes $\varepsilon^{{\scriptscriptstyle l}}_{i,t}$:
\begin{equation}
\mu^{\scriptscriptstyle {l}}_{i,t} = {\textstyle\sum}_{j=1}^{n^{\scriptscriptstyle {l+1}}} \theta^{\scriptscriptstyle {l+1}}_{i,j} f ( x^{\scriptscriptstyle {l+1}}_{j,t} )
\mbox{ \ \ and \ \ }
\varepsilon^{\scriptscriptstyle {l}}_{i,t} = x^{\scriptscriptstyle {l}}_{i,t} - \mu^{\scriptscriptstyle {l}}_{i,t},
\label{eq:pcn-forward-varepsilon}
\end{equation}
where the $\theta^{\scriptscriptstyle {l+1}}_{i,j}$'s are the connection weights, 
paralleling $w^{\scriptscriptstyle {l+1}}_{i,j}$ in the described ANN, and 
$\mu^{\scriptscriptstyle {l}}_{i,t}$ is the prediction of $x^{\scriptscriptstyle {l}}_{i,t}$ based on 
the value nodes  
in a higher layer $x^{\scriptscriptstyle {l+1}}_{j,t}$. 
Thus, 
$\varepsilon^{\scriptscriptstyle {l}}_{i,t}$ computes the difference between the actual and the predicted~$x^{\scriptscriptstyle {l}}_{i,t}$.

The value node $x^{\scriptscriptstyle {l}}_{i,t}$ is modified so that the overall energy~$F_{t}$ in  $\varepsilon^{\scriptscriptstyle {l}}_{i,t}$ is minimized all the time:{
\begin{align}
F_{t} = 
{\textstyle\sum}_{l=0}^{l_{\text{max}}-1} {\textstyle\sum}_{i=1}^{n^{\scriptscriptstyle {l}}}
{ \mbox{$\frac{1}{2}$} ( \varepsilon^{\scriptscriptstyle {l}}_{i,t} ) ^2}\,. 
\label{eq:pcn-f}
\end{align}}%
\indent This way, $x^{\scriptscriptstyle {l}}_{i,t}$ tends to move close to $\mu^{\scriptscriptstyle {l}}_{i,t}$.
Such a process of minimizing $F_{t}$ by modifying all $x^{\scriptscriptstyle {l}}_{i,t}$ is called \textit{inference}, and it is running during both prediction and learning.
It minimizes $F_{t}$ by modifying $x^{\scriptscriptstyle {l}}_{i,t}$, following a unified rule for both stages: {
\begin{equation}
\!\Delta{x}^{\scriptscriptstyle {l}}_{i,t} 
= \begin{cases}
0  &  \!\!\mbox{if } l=l_{\text{max}} \\
\gamma\cdot ( -\varepsilon^{\scriptscriptstyle {l}}_{i,t} + f' ( x^{\scriptscriptstyle {l}}_{i,t} ) {\textstyle\sum}_{k=1}^{n^{\scriptscriptstyle {l-1}}} \varepsilon^{\scriptscriptstyle {l-1}}_{k,t} \theta^{\scriptscriptstyle {l}}_{k,i} ) & \!\!\mbox{if } 0 \,{<}\ l \,{<}\ l_{\text{max}}\\
\gamma \cdot ( -\varepsilon^{\scriptscriptstyle {l}}_{i,t} ) & \!\!\mbox{if } l=0\text{,}
\end{cases}
\label{eq:pcn-dotx-il}
\end{equation}}%
\noindent where $x^{\scriptscriptstyle {l}}_{i,t+1}\, {=}\,  x^{\scriptscriptstyle {l}}_{i,t} \,{+}\,\Delta{x}^{\scriptscriptstyle {l}}_{i,t}$, and $\gamma$ is the integration step for~$x^{\scriptscriptstyle {l}}_{i,t}$. 
Here, $\Delta{x}^{\scriptscriptstyle {l}}_{i,t}$ is different between prediction and learning  only for $l=0$, as the output value nodes $x^{\scriptscriptstyle {0}}_{i,t}$ are left unconstrained during prediction {(i.e., optimized by Eq.~\eqref{eq:pcn-dotx-il})} and are fixed to $s^{\text{out}}_{i}$ {(i.e., $\Delta{x}^{\scriptscriptstyle {0}}_{i,t}=0$)} during learning.

\smallskip 
\noindent\textbf{Prediction: }%
Given an input
$\overline{s}^{\text{in}}$, 
the value nodes $x^{\scriptscriptstyle {l_{\text{max}}}}_{i,t}$ in the input layer are set to 
$s^{\text{in}}_{i}$.
Then, all the error nodes $\varepsilon^{\scriptscriptstyle {l}}_{i,t}$ are optimized by the inference process and decay to zero as $t\,{\rightarrow}\,\infty$.
Thus, the value nodes $x^{\scriptscriptstyle {l}}_{i,t}$ converge to $\mu^{\scriptscriptstyle {l}}_{i,t}$,  the same values as  ${y}^{\scriptscriptstyle {l}}_{i}$ of the corresponding ANN with the same weights.

\smallskip 
\noindent\textbf{Learning: }%
Given 
a pair $(\overline{s}^{\text{in}},\overline{s}^{\text{out}})$ from the training set,  
the value nodes of both the input and the output layers are set to the training pair (i.e., $x^{\scriptscriptstyle {l_{\text{max}}}}_{i,t} = s^{\text{in}}_{i}$ and $x^{\scriptscriptstyle {0}}_{i,t} = s^{\text{out}}_{i}$); thus, 
\begin{equation}
\varepsilon^{\scriptscriptstyle {0}}_{i,t} = x^{\scriptscriptstyle {0}}_{i,t} - \mu^{\scriptscriptstyle {0}}_{i,t} = s^{\text{out}}_{i} - \mu^{\scriptscriptstyle {0}}_{i,t}.
\label{eq:pcn-forward-varepsilon-l0}
\end{equation}
Optimized by the inference process, the error nodes $\varepsilon^{\scriptscriptstyle {l}}_{i,t}$ can no longer decay to zero;
instead, they converge to values as if the errors had been backpropagated.
Once the inference converges to an equilibrium ($t \,{=}\,T$), where~$T$ is a fixed large number, a \textit{weight update} is performed.

The weights~$\theta^{\scriptscriptstyle {l+1}}_{i,j}$ are updated to minimize the same objective function~$F_{t}$;   thus,

\begin{equation}
\Delta \theta^{\scriptscriptstyle {l+1}}_{i,j} = -\alpha\cdot {\partial F_{t}}/{\partial \theta^{\scriptscriptstyle {l+1}}_{i,j}} 
=  \alpha\cdot \varepsilon^{\scriptscriptstyle {l}}_{i,t} f ( x^{\scriptscriptstyle {l+1}}_{j,t} ),
\label{eq:pcn-update-param}
\end{equation}
where $\alpha$ is the learning rate.
By Eqs.~\eqref{eq:pcn-dotx-il} and \eqref{eq:pcn-update-param}, all computations are local (local plasticity) 
in IL,
and the model can autonomously switch between prediction and learning via running inference, as the two phases aim to minimize the same energy function. This makes IL more biologically plausible than BP.
The learning of IL is summarized in Algorithm~\ref{algo:IL}.

\section{Convolutional Neural Networks (CNNs)}



\emph{Convolutional neural networks} (\emph{CNNs}) are biologically inspired networks with a connectivity pattern (given by a set of \emph{kernels}) that resembles the structure of animals' visual cortex. Networks with this particular architecture are widely used in image recognition tasks. A CNN is formed by a sequence of convolutional layers, followed by a sequence of fully connected ones. For simplicity of notation, in the body of this work, we only consider convolutional layers with one kernel. In the supplementary material, we show how to extend our results to the general case. We now recall the structure of CNNs and compare it against convolutional~PCNs.

\subsection{CNNs Trained with BP}

The learnable parameters of a convolutional layer are contained in different kernels. Each kernel $\bar \rho^{\scriptscriptstyle {l}}$ can be seen as a vector of dimension $m$,
which acts on the input vector $ f(\bar y^{\scriptscriptstyle {l}})$ using an operation ``$*$'', called \emph{convolution}, which is equivalent to a linear transformation with a sparse matrix $w^{\scriptscriptstyle {l}}$, whose non-zero entries equal to the entries of the kernel $\bar \rho^{\scriptscriptstyle {l}}$. This particular matrix is called doubly-block circulant matrix \cite{Sedghi20}. For every entry $\rho^{\scriptscriptstyle {l}}_a$ of a kernel, we denote by $\mathcal C^{\scriptscriptstyle {l}}_a$ the set of indices $(i,j)$ such that $w^{\scriptscriptstyle {l}}_{i,j} = \rho^{\scriptscriptstyle {l}}_a$.

Let $f(\bar y^{\scriptscriptstyle {l+1}})$ be the input of a convolutional layer with kernel $\bar{\rho}^{\scriptscriptstyle {l}}$. The output $\bar y^{\scriptscriptstyle {l}}$ can then be computed as in the fully connected case: it suffices to use Eq.~\eqref{eq:bp-forward}, where~$w^{\scriptscriptstyle {l}}$ is the doubly-block circulant matrix with parameters in $\overline \rho^{\scriptscriptstyle {l}}$. During the learning phase, BP updates the parameters of $\bar \rho^{\scriptscriptstyle {l+1}}$ according to the following equation:
\begin{equation}
\Delta \rho^{\scriptscriptstyle {l+1}}_{a} 
= -\alpha\cdot {\partial E}/{\partial \rho^{\scriptscriptstyle {l+1}}_{a}}
= {\textstyle\sum}_{(i,j) \in \mathcal C^{\scriptscriptstyle {l+1}}_a} \Delta w^{\scriptscriptstyle {l+1}}_{i,j}.
\label{eq:cnn_weight}
\end{equation}
The value $\Delta w^{l}_{i,j}$ can be computed using Eq.~\eqref{eq:bp-update-param}.

\subsection{Predictive Coding CNNs Trained with IL}

\begin{figure}[t]
    \centering
	\includegraphics[width=0.4\textwidth]{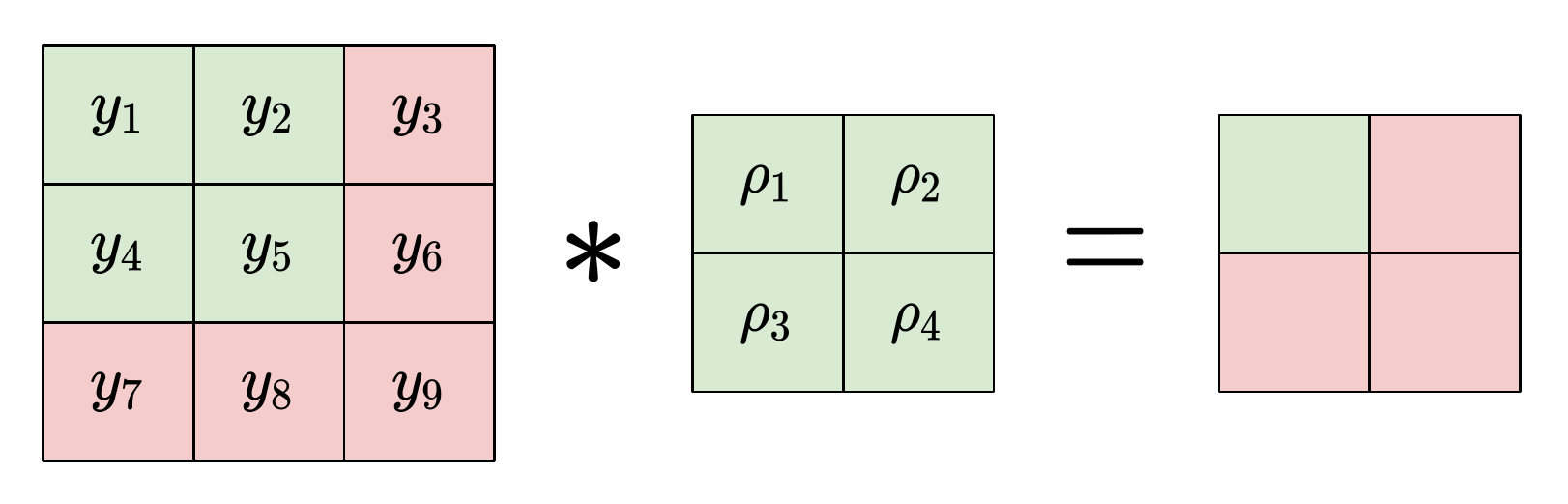}\vspace*{-1.5ex}
	\caption{An example of a \emph{convolution}.}\vspace*{-0.5ex}
	\label{fig:cnn}
\end{figure}

Given a convolutional network, we call  $\overline{\lambda}^{\scriptscriptstyle {l}}$ the kernels of dimension $k$, and $\theta^{\scriptscriptstyle {l}}$ the related double-block circular matrix, which describes the convolution operation. Note that $\overline{\lambda}^{\scriptscriptstyle {l}}$ mirrors $\overline \rho^{\scriptscriptstyle {l}}$ defined in CNNs, while $\theta^l$ mirrors $w^l$. The dynamics of the feedforward pass is the same as the one described in the fully connected case. Hence,  the quantities $\varepsilon^{\scriptscriptstyle {l}}_{i,t}$ and $ \Delta x_{i,t}^{\scriptscriptstyle {l}}$ are computed as in Eqs.~\eqref{eq:pcn-forward-varepsilon} and \eqref{eq:pcn-dotx-il}. The update of the entries of the kernels, on the other hand, is the following:
\begin{equation}
    \Delta {\lambda}_a^{\scriptscriptstyle {l+1}} = -\alpha \cdot \frac{\partial F_t}{\partial {\lambda}_a^{\scriptscriptstyle {l+1}}} =  {\textstyle\sum}_{(i,j) \in \mathcal C^{\scriptscriptstyle {l+1}}_a} \Delta \theta_{i,j}^{\scriptscriptstyle {l+1}}, 
    \label{eq:cnn_il}
\end{equation}
where $\Delta \theta_{i,j}^{\scriptscriptstyle {l}}$ is computed according to Eq.~\eqref{eq:pcn-update-param}.

Detailed derivations of  Eqs.~\eqref{eq:pcn-dotx-il}, \eqref{eq:pcn-update-param}, and \eqref{eq:cnn_il} are given in the supplementary material.

\begin{algorithm}[t]
    \caption{Learning one training pair $(\overline{s}^{\text{in}},\overline{s}^{\text{out}})$ with Z-IL}\label{algo:Z-IL}
    \begin{algorithmic}[1]
    \REQUIRE $\overline{x}^{\scriptscriptstyle {l_{\text{max}}}}_{0}$ is fixed to $\overline{s}^{\text{in}}$; $\overline{x}^{\scriptscriptstyle {0}}_{0}$ is fixed to $\overline{s}^{\text{out}}$.
    \REQUIRE $x^{\scriptscriptstyle {l}}_{i,0}=\mu^{\scriptscriptstyle {l}}_{i,0}$ for $l \in \lbrace 1, \ldots , l_{\text{max}}-1 \rbrace$, and $\gamma=1$.
    \FOR{$t=0$ to $l_{\text{max}}-1$ (included)}
        \FOR{each neuron $i$ in each level $l$}
            \STATE Update $x^{\scriptscriptstyle {l}}_{i,t}$ to minimize $F_{t}$ via Eq.~\eqref{eq:pcn-dotx-il}\
            \IF{$t= l$}
                \STATE Update each  $\theta^{\scriptscriptstyle {l+1}}_{i,j}$ to minimize $F_{t}$ 
                via Eq.~\eqref{eq:pcn-update-param} \ \  
                {
                }
            \ENDIF
        \ENDFOR
    \ENDFOR
    \end{algorithmic}
\end{algorithm}

\subsection{Predictive Coding CNNs Trained with Z-IL}

Above, we have described the training and prediction phases on a single point $s = (\bar s^{\text{in}},\bar s^{\text{out}})$ under different architectures. The training phase of IL on a single point runs for $T$ iterations, during which the inference of Eq.~\eqref{eq:pcn-dotx-il} is conducted, and $T$ is a hyperparameter that is usually set to different sufficiently large values  to get inference converged \cite{whittington2017approximation,millidge2020predictive}. So, the inference phase starts at $t\,{=}\,0$ and ends at $t\,{=}\,T$, which is also when the 
network 
parameters are updated  via 
Eqs.~\eqref{eq:pcn-update-param} and~\eqref{eq:cnn_il}. 

To show that IL is able to do exact BP on both the fully connected and convolutional layers of a CNN, we add constraints on the weights update of IL. These constraints produce a IL-based learning algorithm, called \emph{zero-diver\-gen\-ce}~IL~(\emph{Z-IL}) \cite{song2020can}, which  satisfies the local plasticity condition present in the brain and can autonomously switch between prediction and learning via running inference. We now show that it also produces \emph{exactly} the same weights update as BP on complex architectures such as CNNs.

\smallskip 
\noindent\textbf{Z-IL:} Let $M$ be a PCN model with $l_{\text{max}}$ layers. The inference phase runs for $T = l_{\text{max}}$ iterations. Instead of updating all the weights simultaneously at $t=T$, the parameters of every layer $\theta^{l+1}$ are updated at $t=l$. Hence, the prediction phase of Z-IL is equivalent to the one of IL, while the learning phase updates the parameters according to the  equation
\begin{equation}
\!\Delta{\theta}^{\scriptscriptstyle {l+1,t}}_{i,j} 
= \begin{cases}
0  &  \!\!\mbox{if } t \neq l \\
\alpha\cdot \varepsilon^{\scriptscriptstyle {l}}_{i}
f ( x^{\scriptscriptstyle {l+1}}_{j,t} ) &  \!\!\mbox{if } t = l.
\end{cases}
\label{eq:pcn-dotx}
\end{equation}%
We now show that, under a specific choice of hyperparameters, Z-IL is equivalent to BP on CNNs. Particularly, we add the following two conditions: $\varepsilon^l_{i,0} \,{=}\, 0$ for $l\,{>}\,0$, and $\gamma  \,{=}\, 1$. 

 The first condition can be obtained by setting $\bar x^l_{0} \,{=}\, \bar \mu^l_{0}$ for every $l\,{>}\,0$ at the start of inference. Considering the vector $\bar \mu^l_0$ is computed from $\bar x^{l+1}_0$ via  Eq.~\eqref{eq:pcn-forward-varepsilon}, this allows IL to start from a prediction stage  equivalent to the one of~BP. The second condition guarantees the propagation of the error during the inference phase to match the one of BP. Without it, 
 Z-IL would be equivalent to a variation of BP, where the weight updates of single layers would have different learning rates. 

The following theorem shows that Z-IL on convolutional PCNs is equivalent to BP on classical CNNs.

\begin{theorem}\label{thm:1}
    \label{ther:final-equal} 
    Let $M$ be a convolutional PCN trained with Z-IL with $\gamma \,{=}\, 1$ and $\varepsilon^l_{i,0} \,{=}\, 0$ for $l\,{>}\,0$, and let $M'$ be its corresponding CNN, initialized as $M$ and trained with BP.
   Then, given the same datapoint $s$ to both networks, we have 
   \begin{equation}
    \Delta \theta^{\scriptscriptstyle {l+1}}_{i,j} = \Delta w^{\scriptscriptstyle {l+1}}_{i,j}  \ \ \ \mbox{and} \ \ \ \Delta {\lambda}^{l+1}_i = \ \Delta \rho^{l+1}_i,
    \end{equation}
    for every $i,j,l \geq 0$.
\end{theorem}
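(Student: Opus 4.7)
The plan is to reduce the convolutional theorem to the fully connected Z-IL$=$BP equivalence already proved in \cite{song2020can}, using the doubly-block circulant representation as the bridge. First I would observe that, viewed at the level of value nodes and individual weight entries $\theta^{l+1}_{i,j}$, a convolutional PCN is simply a fully connected PCN whose weight matrix $\theta^{l+1}$ happens to be doubly-block circulant; likewise the CNN is an ANN whose $w^{l+1}$ has the same sparsity/weight-sharing pattern. Crucially, the inference update \eqref{eq:pcn-dotx-il}, the PCN forward equations \eqref{eq:pcn-forward-varepsilon}, the PCN weight update \eqref{eq:pcn-update-param}, and on the ANN side the forward pass \eqref{eq:bp-forward}, the BP weight update \eqref{eq:bp-update-param}, and the error recursion \eqref{eq:delta-recursive}, are all written componentwise in $\theta^{l+1}_{i,j}$ (resp.\ $w^{l+1}_{i,j}$) and make no use of density of the weight matrix. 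Hence the prior Z-IL$=$BP theorem for fully connected networks applies verbatim at the matrix level and yields $\Delta \theta^{l+1}_{i,j} = \Delta w^{l+1}_{i,j}$ for every $i,j,l$, which is the first equality in the statement.

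The kernel-level identity then drops out by summation: applying $\sum_{(i,j) \in \mathcal{C}^{l+1}_a}$ to both sides of $\Delta \theta^{l+1}_{i,j} = \Delta w^{l+1}_{i,j}$ and invoking \eqref{eq:cnn_weight} on the BP side and \eqref{eq:cnn_il} on the Z-IL side gives $\Delta \lambda^{l+1}_a = \Delta \rho^{l+1}_a$. This is essentially a one-line consequence of the matrix-level equivalence together with the definitions of the kernel updates.

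For completeness I would briefly recall why the fully connected reduction works, since this is what the whole argument rests on. With $\gamma=1$ and $\varepsilon^l_{i,0}=0$ for $l>0$ (which is secured by initialising $x^l_{i,0} = \mu^l_{i,0}$), one proves by induction on $t$ that $\varepsilon^l_{i,t} = \delta^l_i$ precisely when $t = l$, with $\varepsilon^l_{i,t}$ remaining at its initial value for $t<l$. This mirrors the BP recursion \eqref{eq:delta-recursive}, and because Z-IL fires the update of layer $l+1$ exactly at $t=l$ (see \eqref{eq:pcn-dotx}), the PCN update $\alpha\, \varepsilon^l_{i,l}\, f(x^{l+1}_{j,l})$ coincides with the BP update $\alpha\, \delta^l_i\, f(y^{l+1}_j)$. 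The initialisation also guarantees $x^{l+1}_{j,l} = y^{l+1}_j$ throughout the first $l_{\text{max}}$ inference steps, closing the identification.

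The main obstacle I anticipate is bookkeeping: one has to check carefully that the induction establishing $\varepsilon^l_{i,t} = \delta^l_i$ is not disturbed by the sparsity pattern of $\theta^l$, and that the tied entries in the doubly-block circulant matrix do not introduce spurious couplings during inference (they do not, since inference treats each $\theta^l_{i,j}$ independently and weight tying only enters in \eqref{eq:cnn_il} via the final summation). The PCN and BP error recursions have the structurally identical form $\sum_k \varepsilon^{l-1}_{k,t} \theta^l_{k,i}$ versus $\sum_k \delta^{l-1}_k w^l_{k,i}$, so any vanishing entry is shared by both sums and the equality propagates. Thus the convolutional extension is, at its core, a careful wrapper around the fully connected result and requires no genuinely new analytical machinery.
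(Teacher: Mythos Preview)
Your proposal is correct and follows essentially the same approach as the paper: establish $\varepsilon^{l}_{i,l}=\delta^{l}_{i}$ by induction (using that value nodes remain at their forward-pass values until time $t=l$ and that $\gamma=1$ makes the $\varepsilon$-recursion match the $\delta$-recursion), deduce $\Delta\theta^{l+1}_{i,j}=\Delta w^{l+1}_{i,j}$ at the matrix level, and then sum over $\mathcal{C}^{l+1}_a$ via \eqref{eq:cnn_weight} and \eqref{eq:cnn_il} to obtain the kernel identity. The paper's full proof packages the first two steps as Lemmas~\ref{lem:pcn-propagate-zero-app} and~\ref{lem:pcn-varepsilon-iterative-app} rather than citing the fully connected result of \cite{song2020can} directly, but the logical content is the same; your explicit remark that weight tying only enters through the final summation and does not affect the inference dynamics is exactly the point that makes the reduction go through.
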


\begin{proof}[Proof (Sketch)]
The proof is divided into three different steps. Each step proves a specific claim:

\smallskip 
\noindent\textbf{Step 1: } At $t=l$, we have $\varepsilon^l_{i,l} = \delta^l_i$ for every $l$.

\noindent This part of the proof is done by induction on the number of layers. We begin by noting that $w^l_{i,j} = \theta^l_{i,j}$, because of the same initialization of the network. This, together with the condition $\varepsilon^l_{i,0} = 0$ for $l>0$, shows the equivalence of the initial states.  The condition  $\gamma = 1$  guarantees that the error~$\varepsilon^l_{i,t}$ spreads as BP among different layers. 

\smallskip 
\noindent\textbf{Step 2: } We have  $\Delta \theta^{\scriptscriptstyle {l+1}}_{i,j} = \Delta w^{\scriptscriptstyle {l+1}}_{i,j}$ for every $i,j,l \geq 0$.

\noindent The result proven in the first step allows to substitute $\varepsilon^l_{i,l} = \delta^l_i$ in the following equations:
\begin{equation}
    \begin{split}
    & \Delta w^{\scriptscriptstyle {l+1}}_{i,j} = \alpha \cdot \delta^{l}_i f(y^{l+1}_{j}) \\
    & \Delta \theta^{\scriptscriptstyle {l+1}}_{i,j} = \alpha \cdot \varepsilon^{l}_i f(x^{l+1}_{j,l}).
    \end{split}
    \label{eq:weight_proof}
\end{equation}
The missing piece $f(y^{l+1}_{j}) \,{=}\, f(x^{l+1}_{j,l})$ is proved in the supplementary material. Note that the proof of this claim is sufficient to show the equivalence between BP and PCN in fully connected networks.  The following last step proves this equivalence also in convolutional layers.

\smallskip 
\noindent\textbf{Step 3: } We have  $\Delta \lambda^{\scriptscriptstyle {l+1}}_{a} = \Delta \rho^{\scriptscriptstyle {l+1}}_{a}$ for every $a,l \geq 0$.

\noindent The laws that regulate the updates of the kernels are shown in Eqs.~\eqref{eq:cnn_weight} and~\eqref{eq:cnn_il}. In PCNs trained with Z-IL, these two equations are equal by the result shown in Step 2. 
\end{proof}

We have shown the main claims of this work about CNNs: (i) BP can be made biologically plausible when trained on CNNs, (ii) Z-IL is a learning algorithm that allows PCNs to perfectly replicate the dynamics of BP on complex models, and (iii)~Z-IL removes the unrealistic/non-trivial requirements of~IL. Note that the new conditions in Z-IL are a simple setup of some hyperparameters or an adjustment of the algorithm, while the requirements of IL (that the model can make a sufficiently perfect prediction, and inference can suffi\-cient\-ly converge) are significantly more difficult to control.

\section{Recurrent Neural Networks (RNNs)}

While CNNs achieve impressive results in computer vision tasks, their performance drops when handling data with sequential structure, such as natural language sentences. An example is sentiment analysis: given a sentence $S^{\text{in}}$ with words $(\overline s^{\text{in}}_1, \dots, \overline s^{\text{in}}_N)$, predict whether this sentence is positive, negative, or neutral. To perform classification and regression tasks on this kind of data, the last decades have seen the raise of recurrent neural networks (RNNs). Networks that deal with a sequential input and a non-sequential output are called \emph{many-to-one} RNNs. An example of such an architecture is shown in Fig.~\ref{fig:rnn}.
In this section, we show that the proposed Z-IL, along with our conclusions, can be extended to RNNs as well.
We first recall RNNs trained with BP, and then show how to define a recurrent PCN trained with IL. We conclude  by showing that the proposed Z-IL can also be carried over and scaled to RNNs, and that our equivalence conclusions still hold.

\subsection{RNNs Trained with BP}

An RNN for classification and regression tasks has three different weight matrices $w^x, w^h$, and $ w^y$, $N$ hidden layers of dimension $n$, and an output layer {of dimension} $n^{\text{out}}$. When it does not lead to confusion, we will alternate the notation between $k=\text{out}$ and $k=N+1$. This guarantees a lighter notation in the formulas. A sequential input $S^{\text{in}} = \{ \overline s^{\text{in}}_{1}, \dots , \bar s^{\text{in}}_{N} \}$ is a sequence of $N$ vectors of dimension $n^{\text{in}}$. The first hidden layer is computed using the first vector of the sequential input, while  the output layer is computed by multiplying the last hidden layer by the matrix $w^y$, i.e.,  $\bar y^{\text{out}} = w^y \cdot f(\bar y^N) $. The structure of the RNN with the used notation is summarized in Fig.~\ref{fig:rnn}. By assuming $\overline y^0 = \overline 0$, the local computations of the network can be written~as follows:
\begin{equation}
    \begin{split}
    & y^k_i = {\textstyle\sum}_{j=1}^{n} w^h_{i,j} f ( y^{\scriptscriptstyle {k-1}}_{j} ) + {\textstyle\sum}_{j=1}^{n^{\text{in}}} w^x_{i,j} s^{\text{in}}_{k,j}, \\
    & y^{\text{out}}_i = {\textstyle\sum}_{j=1}^{n} w^y_{i,j} f ( y^{\scriptscriptstyle N}_{j} ).
    \end{split}
    \label{eq:rnn-forward}
\end{equation}

\smallskip 
\noindent\textbf{Prediction: }%
Given a sequential value
$S^{\text{in}}$ as input, every $y^{\scriptscriptstyle {k}}_{i}$ in the RNN is computed via Eq.~\eqref{eq:rnn-forward}.

\smallskip 
\noindent\textbf{Learning: }%
Given a sequential value
$S^{\text{in}}$ as input, the output $\bar y^{\text{out}}$ is then compared with the label $\overline s^{\text{out}}$ using the objective function $E$ described by Eq.~\eqref{eq:bp-error}.
We now show how BP updates the weights of the three weight matrices. Note that $w^y$ is a fully connected layer that connects the last hidden layer to the output layer. We have already computed this specific weight update in Eq.~\eqref{eq:bp-update-param}:
\begin{equation}
    \Delta w^y_{i,j} = \alpha \cdot \delta^{\text{out}}_i f(y^N_j) \mbox{ \ \ with \ \ } \delta^{\text{out}}_i =  s^{\text{out}}_{i} - y^{\text{out}}_{i}.
\end{equation}
The gradients of $E$ relative to the single entries of 
$w^x$ and $w^y$ are the sum of the gradients at each recurrent layer $k$. Thus,
\begin{equation}
    \begin{split}
        & \Delta w^x_{i,j} = \alpha \cdot {\textstyle\sum}_{k=1}^N \delta^{k}_i s^{\text{in}}_{k,j} \\
        & \Delta w^h_{i,j} = \alpha \cdot {\textstyle\sum}_{k=1}^N \delta^{k}_i f(y^{k-1}_j).
    \end{split}
    \label{eq:weight_rnn_bp}
\end{equation}
The error term $\delta^{\scriptscriptstyle {k}}_{i} \,{ =}\, {\partial E}/{\partial {y}^{\scriptscriptstyle {k}}_{i}}$ is defined as in~Eq.~\eqref{eq:delta-recursive}:

\begin{equation}
    \delta^{\scriptscriptstyle {k}}_{i} = f' ( y^{\scriptscriptstyle {k}}_{i} ) {\textstyle\sum}_{j=1}^{n} \delta^{\scriptscriptstyle {k+1}}_{j} w^{\scriptscriptstyle {h}}_{j,i}.
    \label{eq:error-rnn}
\end{equation}
\subsection{Predictive Coding RNNs Trained with IL}

\begin{figure}
    \centering
	\includegraphics[width=0.3\textwidth]{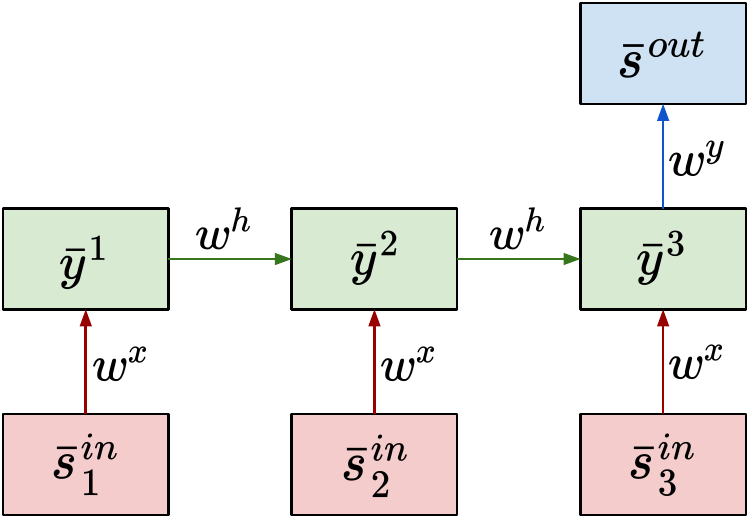}\vspace*{-1ex}
	\caption{An example of a \emph{many-to-one} RNN.}\vspace*{-1ex}
	\label{fig:rnn}
\end{figure}

We show how to define a recurrent PCN trained with IL.
Recurrent PCNs have the same layer structure as the network introduced in the previous section. Hence, by assuming $\overline x^0 = \overline 0$, the forward pass is given by as follows: 
\begin{equation}
    \begin{split}
    & \mu^k_{i,t} = {\textstyle\sum}_{j=1}^{n} \theta^h_{i,j} f ( x^{\scriptscriptstyle {k-1}}_{j,t} ) + {\textstyle\sum}_{j=1}^{n} \theta^x_{i,j} s^{\text{in}}_{k,j}, \\
    & \mu^{\text{out}}_{i,t} = {\textstyle\sum}_{j=1}^{n} \theta^y_{i,j} f ( x^{\scriptscriptstyle N}_{j,t} ).
    \end{split}
    \label{eq:rnn-forward-il}
\end{equation}
Here, $\theta^x$, $\theta^h$, and $\theta^y$ are the weight matrices paralleling $w^x$, $w^h$, and $w^y$, respectively. The $\mu^{\scriptscriptstyle {k}}_i$ and $x^{\scriptscriptstyle {k}}_i$ are defined as in the preliminaries. Again, error nodes computes the error between them $\varepsilon^{\scriptscriptstyle {k}}_{i,t} = x^{\scriptscriptstyle {k}}_{i,t} - \mu^{\scriptscriptstyle {k}}_{i,t}$. During the inference phase, the value nodes $x^{\scriptscriptstyle {k}}_{i,t}$ are updated to minimize the energy function in Eq.~\eqref{eq:pcn-f}. During the learning phase, this update is done via:
\begin{equation}
\!\Delta{x}^{\scriptscriptstyle {k}}_{i,t} = \begin{cases}
\gamma\cdot ( -\varepsilon^{\scriptscriptstyle {k}}_{i,t} + f' ( x^{\scriptscriptstyle {k}}_{i,t} ) {\textstyle\sum}_{j=1}^{n} \varepsilon^{\scriptscriptstyle {k-1}}_{j,t} \theta^{\scriptscriptstyle {h}}_{j,i} ) & \!\!\mbox{if } k \geq 1\\
0 & \!\!\mbox{if } k = \text{out}.
\end{cases}
\label{eq:rnn-pcn-dotx}
\end{equation}%

\smallskip 
\noindent\textbf{Prediction: }%
Given a sequential value
$S^{\text{in}}$ as input, every $\mu^{\scriptscriptstyle {k}}_{i}$ in the RNN is computed as the prediction via Eq.~\eqref{eq:rnn-forward-il}. Again, all error nodes converge to zero when $t\rightarrow \infty$, thus, $x^{\scriptscriptstyle {k}}_{i} = \mu^{\scriptscriptstyle {k}}_{i}$. 

\smallskip 
\noindent\textbf{Learning: }%
Given a sequential value
$S^{\text{in}}$ as input, the error in the output layer is set to  $\varepsilon^{\text{out}}_{i,0} = s^{\text{out}}_{i} - \mu^{\text{out}}_{i,0}$. From here, the inference phase spreads the error among all the neurons of the network. Once this process has converged to an equilibrium, the parameters of the network get updated in order to minimize the same objective function expressed in Eq.~\eqref{eq:pcn-f}. This causes the following weight updates:
\begin{equation}
    \begin{split}
        & \Delta \theta^x_{i,j} = \alpha \cdot {\textstyle\sum}_{k=1}^N \varepsilon^{k}_{i,t} s^{\text{in}}_{k,j} \\
        & \Delta \theta^h_{i,j} = \alpha \cdot {\textstyle\sum}_{k=1}^N \varepsilon^{k}_{i,t} f(x^{k-1}_j) \\
        & \Delta \theta^y_{i,j} = \alpha \cdot \varepsilon^{\text{out}}_{i,t} f(x^N_j).
    \end{split}
    \label{eq:weight_rnn_il}
\end{equation}
The derivations of Eqs.~\eqref{eq:rnn-pcn-dotx} and \eqref{eq:weight_rnn_il} are discussed in the supplementary material.

\subsection{Predictive Coding RNNs Trained with Z-IL}

We now show that Z-IL can also be carried over and scaled to RNNs, and that the equivalence of Theorem~\ref{thm:1} also holds for the considered RNNs. This equivalence can be extended to deeper networks, as it suffices to stack multiple layers (fully connected or convolutional) on top of the RNN's output layer.

\begin{theorem}
    \label{ther:final-equal-rnn} 
    Let $M$ be a recurrent PCN trained with Z-IL with $\gamma \,{=}\, 1$ and $\varepsilon^k_{i,0} \,{=}\, 0$ for $k \,{>}\,0$, and let $M'$ be its corresponding RNN, initialized as $M$ and trained with BP.
   Then, given the same sequential input $S \,{=}\, \{\bar s_1, \dots, \bar s_N\}$ to both, 
\begin{equation}
    \begin{split}
        & \Delta \theta^x_{i,j} = \Delta w^x_{i,j} \\
        & \Delta \theta^h_{i,j} = \Delta w^h_{i,j} \\
        & \Delta \theta^y_{i,j} = \Delta w^y_{i,j},
    \end{split}
    \label{eq:weight_rnn_theo}
\end{equation}
    for every $i,j>0$.
\end{theorem}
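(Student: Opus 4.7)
The plan is to mirror the three-step strategy used for the CNN case in Theorem~\ref{thm:1}, taking care of two RNN-specific complications: (i) the forward direction runs from $k\,{=}\,1$ (first hidden layer, fed by $\bar s^{\text{in}}_1$) up through $k\,{=}\,\text{out}$, so the error has to descend from $\text{out}$ down to $1$; and (ii) $\theta^h$ and $\theta^x$ are shared across all recurrent time-steps, so each of their BP updates is a sum of $N$ per-layer terms, whereas a Z-IL update naturally deposits only one such term at a time. This forces the Z-IL schedule to log partial contributions to $\theta^h$ and $\theta^x$ at $N$ different inference times and then sum them.

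First I would align the forward passes, showing $x^k_{i,0}\,{=}\,y^k_i$ for every $k\,{\geq}\,1$ and $\mu^{\text{out}}_{i,0}\,{=}\,y^{\text{out}}_i$. This uses only the identical initialisation of the two networks together with the condition $\varepsilon^k_{i,0}\,{=}\,0$, which forces $x^k_{i,0}\,{=}\,\mu^k_{i,0}$, combined with an induction on $k$ via Eq.~\eqref{eq:rnn-forward-il}. An immediate consequence is $\varepsilon^{\text{out}}_{i,0}\,{=}\,\delta^{\text{out}}_i$, the BP base case.

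Next I would prove, by descending induction on $k$ from $N$ down to $1$, the invariant that at time $t\,{=}\,N{+}1{-}k$ we have $\varepsilon^k_{i,\,N+1-k}\,{=}\,\delta^k_i$, and moreover that every strictly upstream value node $x^{k-1}$ is still at its initial value $y^{k-1}$. The base case $k\,{=}\,N$ uses $\gamma\,{=}\,1$, $\theta^y\,{=}\,w^y$ and the inference rule for the top hidden layer to obtain $\Delta x^N_{i,0}$ equal to the BP expression for $\delta^N_i$; since $x^{N-1}$ has not yet moved, $\mu^N$ is unchanged, so $\varepsilon^N_{i,1}\,{=}\,\delta^N_i$. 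The inductive step is analogous, invoking Eq.~\eqref{eq:error-rnn} together with the upstream-not-yet-touched portion of the invariant.

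Finally I would assemble the three weight identities. The output weights $\theta^y$ are updated once, at $t\,{=}\,0$, producing $\alpha\,\varepsilon^{\text{out}}_{i,0}\,f(x^N_{j,0})\,{=}\,\alpha\,\delta^{\text{out}}_i\,f(y^N_j)\,{=}\,\Delta w^y_{i,j}$. For the shared matrices, the $k$-th partial contribution to $\theta^h$ and $\theta^x$ is logged at $t\,{=}\,N{+}1{-}k$; the two invariants from Step~2 then guarantee $\varepsilon^k_{i,\,N+1-k}\,{=}\,\delta^k_i$ and $x^{k-1}_{j,\,N+1-k}\,{=}\,y^{k-1}_j$, so the $k$-th term matches the $k$-th term in Eq.~\eqref{eq:weight_rnn_bp} exactly, and summing over $k\,{=}\,1,\ldots,N$ yields the full equality. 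I expect the main obstacle to be not any single algebraic step (each piece is essentially the CNN argument applied layer by layer) but the bookkeeping around the Z-IL schedule for shared weights: one must verify that, at each $t$, only the layer whose error has just been computed contributes, and that the presynaptic activities $x^{k-1}$ and the predictors $\mu^k$ used in that contribution have not yet been displaced by any earlier inference step.
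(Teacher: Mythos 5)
Your argument is correct, but it takes a genuinely different route from the paper's. You unroll the recurrence and apply the CNN-style Z-IL schedule to the unrolled network of depth $N{+}1$: by descending induction on the layer index $k$ you show the error first reaches recurrent layer $k$ at $t=N{+}1{-}k$ with $\varepsilon^k_{i,N+1-k}=\delta^k_i$ while all upstream value nodes are still at their initial values, and you log the $k$-th contribution to the shared matrices $\theta^h,\theta^x$ at that time, summing the $N$ contributions at the end. The paper instead treats the whole recurrent chain as a single Z-IL ``layer'': it sets $T=2$, claims $\varepsilon^k_{i,1}=\delta^k_i$ for \emph{all} $k\le N$ already at $t=1$, and proves this by induction on the sequence length $N$ using Lemma~\ref{lem:pcn-varepsilon-iterative-app-rnn}, which relates the errors of adjacent recurrent layers at the same fixed time $t=1$ (in effect a backwards sweep over $k$ within one inference step); all of $\theta^h$ and $\theta^x$ is then updated once at $t=1$. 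Your version buys compatibility with the standard simultaneous inference dynamics and the literal ``update layer $l$ at $t=l$'' rule, at the cost of $T=N{+}1$ inference steps and the extra bookkeeping of accumulating partial updates to shared weights across time; the paper's version keeps $T$ independent of the sequence length (which underpins its efficiency claims) but implicitly requires the within-step error computation to be ordered backwards over $k$. Both hinge on the same two facts — $\varepsilon^k=\delta^k$ at the moment the $k$-th contribution is taken, and the presynaptic $x^{k-1}$ not yet displaced — so the algebra coincides; only the scheduling differs.
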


This concludes the main results of this work: our newly proposed algorithm Z-IL allows PCNs to do exact BP in complex models, such as RNNs and CNNs. We now move to a more robust analysis of Z-IL, showing practical advantages relative to IL that also trains PCNs.

\section{Computational Analysis of Z-IL}\label{sec:comp}

We now show that Z-IL is computationally significantly less costly than IL and only slightly more costly than BP.

By the previous sections, IL requires inference to be conducted for a number of steps until convergence, which is infinite theoretically and a fixed value empirically. In the only previous work that trains PCN-version of CNNs and RNNs, $T$ is between $100$ and $200$ \cite{millidge2020predictive}. In contrast, \mbox{Z-IL} requires inference to be conducted for $l_{\text{max}}$ steps, which is the smallest number of steps required to transfer information from the first to the last layer. Thus, IL requires significantly more steps of inference than Z-IL.


To have an empirical comparison on the computational complexity of IL and Z-IL, as well as to show that Z-IL creates only a little overhead compared to BP, Table \ref{tb:time} shows the average running time of each weights update of BP, IL, and Z-IL for different architectures. 
As can be seen, IL introduces large overheads, due to the fact that IL needs at least $T$ inference steps before conducting a weights update. 
In contrast, Z-IL runs with a minor overhead compared to BP, as it requires at most $l_{\text{max}}$ inference steps to complete one update of weights in all layers. To generate these numbers, we have performed several experiments on multiple datasets (FashionMNIST, ImageNet, and 8 different Atari games) and reported the averages. A comprehensive report of all the details of the experiments is in the supplementary material.

\begin{table}[t]
\caption{Average running time of each weights update (in ms) of BP, IL, and Z-IL for different architectures.}
  \label{tb:time}
 \vspace*{-1ex} \centering
\resizebox{0.9\columnwidth}{!}{
    \begin{tabular}{@{}cccc@{}}
    \toprule
    \cmidrule(r){1-4}
    Architecture & Backpropagation & Inference Learning \cite{millidge2020predictive} & Zero-Divergence Inference Learning                 \\
    \midrule
    ANN     & $3.72$  & $594.25$ & $3.81$ \\
    RNN     &  $5.64$ & $420.34$ & $5.67$ \\
    CNN     & $8.72$  & $661.53$ & $8.86$
    \\
    \bottomrule
    \end{tabular}
  }\vspace*{-2ex}
\end{table}

\section{Related Work}

PCNs are an influential theory of cortical function in theoretical and computational neuroscience with appealing interpretations, such as free-energy minimization 
\cite{bogacz2017tutorial}
and variational inference of  probabilistic models \cite{whittington2017approximation}.
They offer a single mechanism that accounts for diverse perceptual phenomena observed in the brain, such as end-stopping \cite{rao1999predictive}, repeti\-tion-suppression \cite{auksztulewicz2016repetition}, illusory motions 
\cite{watanabe2018illusory},
bistable perception,  
and even attentional modulation of neural activity 
\cite{kanai2015cerebral}.
Due to this solid biological grounding, PCNs are also attracting interest in machine learning recently, especially focusing on finding the links between PCNs and BP \cite{whittington2017approximation,millidge2020predictive}.

{Biologically plausible approximations to BP} have been intensively studied since the flourishing of BP, because on the one hand, the underlying principles of BP are unrealistic for an implementation in the brain
\cite{whittington2019theories}, 
but on the other hand, BP outperforms all alternative discovered frameworks~\cite{baldi2016theory} and closely reproduces activity patterns observed in the cortex
\cite{khaligh2014deep,yamins2016using,kell2018task}.
However, earlier biologically plausible approximations to BP were not scaling to larger and more complicated problems 
\cite{scellier2017equilibrium,illing2019biologically}.
More recent works show the capacity of scaling up biologically plausible approximations to the level of BP 
\cite{aljadeff2019cortical,wang2020supervised}.
But to date, none of the earlier or recent models 
has bridged the gaps at a degree of showing an equivalence to BP, though some of them \cite{lee2015difference,nokland2019training,ororbia2017learning,ororbia2019biologically} show that they  approximate (or are equivalent to) BP, but under unrealistic restrictions, e.g., the feedback is sufficiently weak \cite{xie2003equivalence,whittington2017approximation,millidge2020predictive}.

\section{Conclusion}

In this paper, we have extended existing results on Z-IL, an algorithm that trains PCNs, showing that it (i)  removes the unrealistic and non-trivial requirements of IL, while maintaining its biological plausibility, (ii) is strictly equivalent to BP on complex models such as CNNs and RNNs, and (iii) reduces the computational costs by a large margin compared  to IL, to a similar level of BP on all of the aforementioned models. These findings allow PCNs to reach the level of BP in terms of practical performance, as they are now provably able to train a network with the same accuracy of BP with only a small computational overhead.
Thus, the recently proposed Z-IL significantly strengthens the link between PCNs and BP, which is an important finding for both the machine learning and the neuroscience community. 
On the other hand, our work may also indicate that BP happens in the brain as just one part of the learning process, since Z-IL can be seen as a specific moment during the training of PCN with IL; thus, there may be something missing apart from BP.
So, BP may be more important in neuroscience than commonly thought.

 
\section*{Acknowledgments}
This work was supported by the China Scholarship Council under the State Scholarship Fund, by the
National Natural Science Foundation of China under the grant 61906063, by the Natural Science
Foundation of Tianjin City, China, under the grant 19JCQNJC00400, by the “100 Talents Plan” of
Hebei Province, China, under the grant E2019050017, and by the Medical Research Council UK
grant MC\_UU\_00003/1. This work was also supported by the Alan Turing Institute under the EPSRC
grant EP/N510129/1 and by the AXA Research Fund.

\bibliographystyle{abbrv}
\bibliography{references}

\appendix

\section{Additional Details about IL and Z-IL}

\subsection{\label{sec:bias-as-param}Bias Values as Parameters}

We now add bias values as parameters, denoted by $v^{\scriptscriptstyle {l+1}}_{i}$ and $\beta^{\scriptscriptstyle {l+1}}_{i}$ for ANNs and PCNs, respectively.

Formally, in ANNs trained with BP,  Eq.~\eqref{eq:bp-forward} becomes:
\begin{equation}
y^{\scriptscriptstyle {l}}_{i} = {\textstyle\sum}_{j=1}^{n^{\scriptscriptstyle {l+1}}} w^{\scriptscriptstyle {l+1}}_{i,j} f ( y^{\scriptscriptstyle {l+1}}_{j} )+v^{\scriptscriptstyle {l+1}}_{i}. 
\label{eq:bp-forward-bias}
\end{equation}
Accordingly, we have an update rule for bias parameters (bias update) in addition to the one of the weight parameters (weight update):
\begin{equation}
\Delta v^{\scriptscriptstyle {l+1}}_{i,j} 
= -\alpha\cdot {\partial E}/{\partial v^{\scriptscriptstyle {l+1}}_{i,j}}
=\alpha\cdot \delta^{\scriptscriptstyle {l}}_{i}.
\label{eq:bp-update-param-bias}
\end{equation}

Similarly, in PCNs trained with IL, originally recalled in the Preliminary Section becomes:
\begin{equation}
\mu^{\scriptscriptstyle {l}}_{i,t} = {\textstyle\sum}_{j=1}^{n^{\scriptscriptstyle {l+1}}} \theta^{\scriptscriptstyle {l+1}}_{i,j} f ( x^{\scriptscriptstyle {l+1}}_{j,t} ) + \beta^{\scriptscriptstyle {l+1}}_{i}
\mbox{ \ \ and \ \ }
\varepsilon^{\scriptscriptstyle {l}}_{i,t} = x^{\scriptscriptstyle {l}}_{i,t} - \mu^{\scriptscriptstyle {l}}_{i,t}.
\label{eq:pcn-forward-varepsilon-bias}
\end{equation}
Accordingly, we have an update rule for bias parameters in addition to the one of the weight parameters:
\begin{equation}
\Delta \beta^{\scriptscriptstyle {l+1}}_{i,j} = -\alpha\cdot {\partial F_{t}}/{\partial \beta^{\scriptscriptstyle {l+1}}_{i,j}} 
=  \alpha\cdot \varepsilon^{\scriptscriptstyle {l}}_{i,t}.
\label{eq:pcn-update-param-bias}
\end{equation}
Otherwise, all equations and conclusions of the main body still hold.

We only need to add the assumption that the bias parameter for both ANNs and PCNs are also identical initially. 
To prove the conclusion of zero divergence of the weights update, the procedure remains unchanged.
To prove the conclusion of zero divergence of the bias update, we only need Theorem~\ref{ther:final-equal-supp}, as it directly leads to the equivalence of bias update by  Eqs.~\eqref{eq:bp-update-param-bias} and~\eqref{eq:pcn-update-param-bias}.

\subsection{\label{sec:il-inference}Derivation of Eq.~\eqref{eq:pcn-dotx-il}}

We first expand Eq.~\eqref{eq:pcn-f} with the definition of $\varepsilon^{\scriptscriptstyle {l}}_{i,t} = x^{\scriptscriptstyle {l}}_{i,t} - \mu^{\scriptscriptstyle {l}}_{i,t}$:
\begin{align}
F_{t} = 
{\textstyle\sum}_{l=0}^{l_{\text{max}}-1} {\textstyle\sum}_{i=1}^{n^{\scriptscriptstyle {l}}}
{ \mbox{$\frac{1}{2}$} ( \varepsilon^{\scriptscriptstyle {l}}_{i,t} ) ^2}
= {\textstyle\sum}_{l=0}^{l_{\text{max}}-1} {\textstyle\sum}_{i=1}^{n^{\scriptscriptstyle {l}}}
{ \mbox{$\frac{1}{2}$} ( x^{\scriptscriptstyle {l}}_{i,t} - \mu^{\scriptscriptstyle {l}}_{i,t} ) ^2}\,.
\label{eq:pcn-f-expand-supp}
\end{align}
Inference minimizes $F_{t}$ by modifying $x^{\scriptscriptstyle {l}}_{i,t}$ proportionally to
the gradient of the objective function~$F_{t}$.
To calculate the derivative of $F_{t}$ over $x^{\scriptscriptstyle {l}}_{i,t}$, we note that each $x^{\scriptscriptstyle {l}}_{i,t}$ influences $F_{t}$ in two ways: (1)~it occurs in Eq.~\eqref{eq:pcn-f-expand-supp} explicitly, but (2) it also determines the values of $\mu^{\scriptscriptstyle {l-1}}_{k,t}$ via Eq.~\eqref{eq:pcn-forward-varepsilon}. 
Thus, the derivative contains two terms:
\begin{align*}
\Delta{x}^{\scriptscriptstyle {l}}_{i,t} 
&= - \gamma\cdot\frac{\partial F_{t}}{\partial x^{\scriptscriptstyle {l}}_{i,t}} \\
&= - \gamma\cdot(\frac{\partial \mbox{$\frac{1}{2}$} ( x^{\scriptscriptstyle {l}}_{i,t} - \mu^{\scriptscriptstyle {l}}_{i,t} ) ^2}{\partial x^{\scriptscriptstyle {l}}_{i,t}} + \frac{\partial {\textstyle\sum}_{k=1}^{n^{\scriptscriptstyle {l-1}}} \mbox{$\frac{1}{2}$} ( x^{\scriptscriptstyle {l-1}}_{k,t} - \mu^{\scriptscriptstyle {l-1}}_{k,t} ) ^2}{\partial x^{\scriptscriptstyle {l}}_{i,t}}) \\
&= \gamma\cdot ( -(x^{\scriptscriptstyle {l}}_{i,t} - \mu^{\scriptscriptstyle {l}}_{i,t}) + f' ( x^{\scriptscriptstyle {l}}_{i,t} )  {\textstyle\sum}_{k=1}^{n^{\scriptscriptstyle {l-1}}} (x^{\scriptscriptstyle {l-1}}_{k,t} - \mu^{\scriptscriptstyle {l-1}}_{k,t}) \theta^{\scriptscriptstyle {l}}_{k,i} ) \\
&= \gamma\cdot ( -\varepsilon^{\scriptscriptstyle {l}}_{i,t} + f' ( x^{\scriptscriptstyle {l}}_{i,t} ) {\textstyle\sum}_{k=1}^{n^{\scriptscriptstyle {l-1}}} \varepsilon^{\scriptscriptstyle {l-1}}_{k,t} \theta^{\scriptscriptstyle {l}}_{k,i} )\,.
\end{align*}
Considering also the special cases at $l=l_{\text{max}}$ and $l=0$, we obtain Eq.~\eqref{eq:pcn-dotx}.


\section{\label{sec:il-update}Derivation of Eq.~\eqref{eq:pcn-update-param}}

The weights update minimizes $F_{t}$ by modifying $\theta^{\scriptscriptstyle {l+1}}_{i,j}$ proportionally to the gradient of the objective function $F_{t}$.
To compute the derivative of the objective function $F_{t}$ over $\theta^{\scriptscriptstyle {l+1}}_{i,j}$, we note that $\theta^{\scriptscriptstyle {l+1}}_{i,j}$ affects the value of the function $F_{t}$ of Eq.~\eqref{eq:pcn-f-expand-supp} by influencing $\mu^{\scriptscriptstyle {l}}_{i,t}$ via Eq.~\eqref{eq:pcn-forward-varepsilon}, hence, 
\begin{align*}
\Delta \theta^{\scriptscriptstyle {l+1}}_{i,j} &= -\alpha\cdot {\partial F_{t}}/{\partial \theta^{\scriptscriptstyle {l+1}}_{i,j}} \\
&= -\alpha\cdot \frac{\partial \mbox{$\frac{1}{2}$} ( x^{\scriptscriptstyle {l}}_{i,t} - \mu^{\scriptscriptstyle {l}}_{i,t} ) ^2}{\partial \theta^{\scriptscriptstyle {l+1}}_{i,j}}  \\
&=  \alpha\cdot \varepsilon^{\scriptscriptstyle {l}}_{i,t} f ( x^{\scriptscriptstyle {l+1}}_{j,t} )\,.
\end{align*}

\subsection{Derivation of Kernel Updates of Convolutional PCNs}

The kernel update minimizes $F_{t}$ by modifying every entry of the kernel proportionally to the gradient of the objective function $F_{t}$. Recall that the weight matrix of a convolutional layer is a doubly-block circulant matrix whose non-zero entries are variables of $\lambda$. This is the final equation:

\begin{align}
    \Delta {\lambda}_a^{l+1} & = -\alpha \cdot \frac{\partial F_t}{\partial {\lambda}_a^{l+1}}  = {\textstyle\sum}_{(i,j) \in \mathcal C^{\scriptscriptstyle {l+1}}_a} \Delta \theta_{i,j}^{\scriptscriptstyle {l+1}}.
\end{align}
This equation is derived as follows:

\begin{align*}
\Delta \lambda^{\scriptscriptstyle {l+1}}_a &= -\alpha\cdot {\partial F_{t}}/{\partial \lambda^{l+1}_a} \\
&= -\alpha\cdot {\textstyle\sum}_l {\textstyle\sum}_a^k  \frac{\partial \mbox{$\frac{1}{2}$} ( x^{\scriptscriptstyle {l}}_{i,t} - \mu^{\scriptscriptstyle {l}}_{i,t} ) ^2}{\partial \lambda^{\scriptscriptstyle {l+1}}_a} \\
&=  -\alpha\cdot {\textstyle\sum}_l {\textstyle\sum}_a^k {\textstyle\sum}_{C^{l+1}_a} \frac{\partial \mbox{$\frac{1}{2}$} ( x^{\scriptscriptstyle {l}}_{i,t} - \mu^{\scriptscriptstyle {l}}_{i,t} ) ^2}{\partial \lambda^{\scriptscriptstyle {l+1}}_{a}} \\
&=  \alpha\cdot {\textstyle\sum}_{(i,j) \in C^{l+1}_a} \varepsilon^{\scriptscriptstyle {l}}_{i,t} f ( x^{\scriptscriptstyle {l+1}}_{j,t} ).
\end{align*}
The last equality  follows from the fact that a variation of $\lambda^l_a$ only affects the entries of the summation with indices in $C^{l+1}_a$. We now bring $\alpha$ inside the summation, and note that every term is equivalent to Eq.~\eqref{eq:pcn-update-param}. This gives
\begin{equation}
\Delta {\lambda}_a^{l+1} = {\textstyle\sum}_{(i,j) \in \mathcal C^{\scriptscriptstyle {l+1}}_a} \Delta \theta_{i,j}^{\scriptscriptstyle {l+1}}.
\end{equation}
\section{Derivation of Weight Updates of Recurrent PCNs}

The weight update minimizes $F_{t}$ by modifying every weight of the three matrices proportionally to the gradient of the objective function $F_{t}$. Particularly, for recurrent PCN, we have the following: 
\begin{equation}
    \begin{split}
        & \Delta \theta^x_{i,j} = \alpha \cdot {\textstyle\sum}_{k=1}^N \varepsilon^{k}_i s^{\text{in}}_j \\
        & \Delta \theta^h_{i,j} = \alpha \cdot {\textstyle\sum}_{k=1}^N \varepsilon^{k}_i f(x^{k-1}_j) \\
        & \Delta \theta^y_{i,j} = \alpha \cdot \varepsilon^{\text{out}}_i f(x^N_j).
        \label{eq:rnn-supp}
    \end{split}
\end{equation}

The derivation of $\Delta \theta^y_{i,j}$ follows directly from Eq.~\eqref{eq:pcn-update-param}, as it is a classical fully connected layer. On the other hand, $\Delta \theta^h_{i,j}$ and $\Delta \theta^x_{i,j}$ depend on different factors: every $\theta^h_{i,j}$ and $\theta^x_{i,j}$ influence the error $\varepsilon^k_j$ for every $k$ for every $k \leq N$. Hence, we have the following:

\begin{align*}
\Delta \theta^{h}_{i,j} &= -\alpha\cdot {\partial F_{t}}/{\partial \theta^{h}_{i,j}} \\
&= -\alpha\cdot {\textstyle\sum}_{k=1}^N \frac{\partial \mbox{$\frac{1}{2}$} ( x^{\scriptscriptstyle {k}}_{i,t} - \mu^{\scriptscriptstyle {k}}_{i,t} ) ^2}{\partial \theta^{h}_{i,j}}  \\
&=  \alpha\cdot {\textstyle\sum}_{k=1}^N \varepsilon^{k}_{i,t} f ( x^{\scriptscriptstyle {k+1}}_{j,t} )\,.
\end{align*}

\begin{align*}
\Delta \theta^{x}_{i,j} &= -\alpha\cdot {\partial F_{t}}/{\partial \theta^{x}_{i,j}} \\
&= -\alpha\cdot {\textstyle\sum}_{k=1}^N \frac{\partial \mbox{$\frac{1}{2}$} ( x^{\scriptscriptstyle {k}}_{i,t} - \mu^{\scriptscriptstyle {k}}_{i,t} ) ^2}{\partial \theta^{x}_{i,j}}  \\
&=  \alpha\cdot {\textstyle\sum}_{k=1}^N \varepsilon^{k}_{i,t} s^{\text{in}}_j\,.
\end{align*}

\section{Proof of Equivalence in CNNs}\label{sec:proof1}

In this section, we prove the following theorem, already stated in the main body of this work.

\begin{theorem}
    \label{ther:final-equal-supp} 
    Let $M$ be a convolutional PCN trained with Z-IL with $\gamma \,{=}\, 1$ and $\varepsilon^l_{i,0} \,{=}\, 0$ for $l\,{>}\,0$, and let $M'$ be its corresponding CNN, initialized as $M$ and trained with BP.
   Then, given the same datapoint $s$ to both networks, we have 
   \begin{equation}
    \Delta \theta^{\scriptscriptstyle {l}}_{i,j} = \Delta w^{\scriptscriptstyle {l}}_{i,j}  \ \ \ \mbox{and} \ \ \ \Delta {\lambda}^{l}_i = \ \Delta \rho^{l}_i,
    \end{equation}
    for every $i,j,l > 0$.
\end{theorem}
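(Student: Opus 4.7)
The plan is to follow the three-step decomposition already outlined, with the crux being Step 1 (matching error signals). The key observation driving everything is that, under Z-IL with $\gamma = 1$ and the initialization $\varepsilon^l_{i,0} = 0$ for all $l > 0$, the error injected at the output layer propagates strictly one layer per inference step, so the error signal at layer $l$ is fresh exactly at time $t = l$, which is precisely when Z-IL performs the weight update for that layer. Since the convolutional case reduces to the fully connected case plus a summation over a kernel's support set $\mathcal C^{l+1}_a$, the hard work is confined to the matrix-entry level.

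First I would establish a \emph{propagation invariant} by induction on $t$: for every $l' > t$, we have $x^{l'}_{i,t} = \mu^{l'}_{i,0}$ and $\varepsilon^{l'}_{i,t} = 0$. The base case $t = 0$ follows from the hypothesis $\varepsilon^l_{i,0} = 0$, which forces $x^l_{i,0} = \mu^l_{i,0}$; in particular, the initial predictions of $M$ coincide with the BP forward pass $y^l_i$ of $M'$ because the two networks share weights. The inductive step uses Eq.~\eqref{eq:pcn-dotx-il}: for $l' > t+1$ both terms in $\Delta x^{l'}_{i,t}$ vanish, and for $l' = t+1$ the predicted value $\mu^{l'}_{i,t+1}$ is still computed from an unchanged deeper layer, so only the layer at depth $l' = t+1$ can acquire a nonzero error at time $t+1$.

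Next I would prove Step 1, namely $\varepsilon^l_{i,l} = \delta^l_i$, by induction on $l$. The base case $l = 0$ is immediate because $\varepsilon^0_{i,0} = s^{\text{out}}_i - \mu^0_{i,0} = s^{\text{out}}_i - y^0_i = \delta^0_i$. For the inductive step, the propagation invariant gives $x^l_{i,l-1} = \mu^l_{i,0}$ and $\mu^l_{i,l} = \mu^l_{i,0}$, so one inference step with $\gamma = 1$ yields $\varepsilon^l_{i,l} = f'(\mu^l_{i,0}) \sum_k \varepsilon^{l-1}_{k,l-1} \theta^l_{k,i}$, which matches the BP recursion of Eq.~\eqref{eq:delta-recursive} after substituting the inductive hypothesis and the identities $\theta^l_{k,i} = w^l_{k,i}$ and $\mu^l_{i,0} = y^l_i$. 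Step 2 then follows at once from Eq.~\eqref{eq:weight_proof}, since $x^{l+1}_{j,l} = \mu^{l+1}_{j,0} = y^{l+1}_j$ is again a consequence of the propagation invariant applied at time $t = l$ and depth $l+1$. Step 3 is a direct summation over $\mathcal C^{l+1}_a$ using Eqs.~\eqref{eq:cnn_weight} and~\eqref{eq:cnn_il}, together with the fact that $M$ and $M'$ share the same convolutional connectivity pattern and hence the same index sets.

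The main obstacle is the bookkeeping in the propagation invariant: one must verify that by time $t = l$ the prediction $\mu^l_{i,l}$ has not been perturbed by inference updates in deeper layers, otherwise the equality $\varepsilon^l_{i,l} = \delta^l_i$ would acquire spurious correction terms that spoil the exact match with BP. The invariant is exactly what rules this out, and both the condition $\gamma = 1$ and the condition $\varepsilon^l_{i,0} = 0$ are essential: the former guarantees that the incoming error is not rescaled as it hops from layer $l-1$ to layer $l$, and the latter guarantees that no stale error is already sitting in the deeper layers at $t=0$. Once the invariant is in place, the remaining algebra is routine.
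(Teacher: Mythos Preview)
Your proposal is correct and follows essentially the same approach as the paper: the paper isolates your ``propagation invariant'' as a separate lemma (Lemma~\ref{lem:pcn-propagate-zero-app}) and the recursion $\varepsilon^{l}_{i,l} = f'(\mu^{l}_{i,0})\sum_k \varepsilon^{l-1}_{k,l-1}\theta^{l}_{k,i}$ as another (Lemma~\ref{lem:pcn-varepsilon-iterative-app}), then chains them together exactly as you do into Claims~1--3. Your decision to prove the invariant by induction on $t$ and fold the error recursion directly into the induction on $l$ is a slightly tighter packaging, but the logical content is identical.
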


\begin{proof}[Proof]
A convolutional network is formed by a sequence of convolutional layers followed by a sequence of fully connected ones. First, we prove the following:

\medskip 
\noindent\textbf{Claim 1:} At $t\,{=}\,l$, we have $\varepsilon^{\scriptscriptstyle {l}}_{i,t} = \delta^{\scriptscriptstyle {l}}_{i}$. 

\medskip 
\noindent This first partial result is proven by induction on the depth $l$ of the two networks, and does not change whether the layer considered is convolutional or fully connected. For PCNs, as $t=l$, it is also inducing on the inference moments. We begin by noting that, in Z-IL, $\varepsilon^{\scriptscriptstyle {l}}_{i,t} =  \varepsilon^{\scriptscriptstyle {l}}_{i,l}$.
\begin{itemize}
    \item \emph{Base Case, $l=0$}:

        The condition $\varepsilon^l_{i,0} \,{=}\, 0$ gives us $\mu^{\scriptscriptstyle {l}}_{i,0}=y^{\scriptscriptstyle {l}}_{i}$. Placing this result into Eq.~\eqref{eq:pcn-forward-varepsilon-l0}
        and Eq.~\eqref{eq:delta-recursive}, we get
        $\varepsilon^{\scriptscriptstyle {l}}_{i,l}=\delta^{\scriptscriptstyle {l}}_{i}$.
    \item \emph{Induction Step:}. For $l \in \lbrace 1, \ldots , l_{\text{max}}-1 \rbrace$, we have: 
        \begin{align*}
            & \varepsilon^{\scriptscriptstyle {l}}_{i,l}= f' ( \mu^{\scriptscriptstyle {l}}_{i,0} ) {\textstyle\sum}_{k=1}^{n^{\scriptscriptstyle {l-1}}} \varepsilon^{\scriptscriptstyle {l-1}}_{k,l-1} \theta^{\scriptscriptstyle {l}}_{k,i}   \text{ \ \ \ by Lemma~\ref{lem:pcn-varepsilon-iterative-app}} \\
            & \delta^{\scriptscriptstyle {l}}_{i} =f' ( y^{\scriptscriptstyle {l}}_{i} ) {\textstyle\sum}_{k=1}^{n^{\scriptscriptstyle {l-1}}} \delta^{\scriptscriptstyle {l-1}}_{k} w^{\scriptscriptstyle {l}}_{k,i}  \text{\ \ \ \ \ \ \ \ \ \ by Eq.~\eqref{eq:delta-recursive}.}
            \label{eq:error}
        \end{align*}
    Furthermore, note that $w_{i,j}^{\scriptscriptstyle {l}}=\theta_{i,j}^{\scriptscriptstyle {l}}$, because of the same initialization of the network, and $\mu^{\scriptscriptstyle {l}}_{i,0} = y^{\scriptscriptstyle {l}}_{i}$, because of $\varepsilon^l_{i,0} \,{=}\, 0$ for $l\,{>}\,0$. Plugging these two equalities into the error equations above gives \begin{equation}
        \varepsilon^{\scriptscriptstyle {l}}_{i,l}=\delta^{\scriptscriptstyle {l}}_{i}, \ \ \text{if} \ \ \varepsilon^{\scriptscriptstyle {l-1}}_{k,l-1}=\delta^{\scriptscriptstyle {l-1}}_{k}.
    \end{equation}
    This concludes the induction step and proves the claim.
\end{itemize}

We now have to show the equivalence of the weights updates. We  start our study from fully connected layers.

\medskip 
\noindent\textbf{Claim 2:} We have  $\Delta \theta^{\scriptscriptstyle {l+1}}_{i,j} = \Delta w^{\scriptscriptstyle {l+1}}_{i,j}$ for every $i,j,l \geq 0$. 

\medskip 
\noindent Eqs.~\eqref{eq:pcn-update-param} and \eqref{eq:bp-update-param} state the following:

\begin{align*}
    & \Delta \theta^{\scriptscriptstyle {l+1}}_{i,j}
    = \alpha\cdot \varepsilon^{\scriptscriptstyle {l}}_{i,l} f ( x^{\scriptscriptstyle {l+1}}_{j,l} ), \\
    & \Delta w^{\scriptscriptstyle {l+1}}_{i,j} 
    =\alpha\cdot \delta^{\scriptscriptstyle {l}}_{i}
    f ( y^{\scriptscriptstyle {l+1}}_{j} ).
\end{align*}

Claim 1 gives $\varepsilon^{\scriptscriptstyle {l}}_{i,l}=\delta^{\scriptscriptstyle {l}}_{i}$.
We now have to show that ${\textstyle {\textstyle f ( x^{\scriptscriptstyle {l+1}}_{j,l} )=f ( y^{\scriptscriptstyle {l+1}}_{j} )}}$. 
The equivalence of the initial state between IL and BP gives ${x^{\scriptscriptstyle {l+1}}_{j,0} = \mu^{\scriptscriptstyle {l+1}}_{j,0} = \textstyle y^{\scriptscriptstyle {l+1}}_{j}}$.
Then, Lemma~\ref{lem:pcn-propagate-zero-app} shows that ${\textstyle x^{\scriptscriptstyle {l+1}}_{j,l}} = x^{\scriptscriptstyle {l+1}}_{j,0}$.
So,  ${\textstyle f ( x^{\scriptscriptstyle {l+1}}_{j,l} )=f ( y^{\scriptscriptstyle {l+1}}_{j} )}$.

\medskip 
\noindent\textbf{Claim 3:} We have  $\Delta \lambda^{\scriptscriptstyle {l+1}}_{a} = \Delta \rho^{\scriptscriptstyle {l+1}}_{a}$ for every $a,l \geq 0$. 

\medskip 
\noindent The law that regulates the updates of the kernels is given by the following equations:

\begin{align}\label{eq:cnn-proof}
    &\Delta {\lambda}_a^{l+1} = -\alpha \cdot \frac{\partial F_t}{\partial {\lambda}_a^{l}} =  {\textstyle\sum}_{(i,j) \in \mathcal C^{l+1}_a} \Delta \theta_{i,j}^{l} \\
    & \Delta \rho^{\scriptscriptstyle {l+1}}_{a} 
    = -\alpha\cdot {\partial E}/{\partial \rho^{\scriptscriptstyle {l+1}}_{a}}
    = {\textstyle\sum}_{(i,j) \in \mathcal C^{l+1}_a} \Delta w^{l+1}_{i,j}.
    \label{eq:cnn-proof-il}
\end{align}
These equations are equal if $\Delta \theta^{\scriptscriptstyle {l}}_{i,j} = \Delta w^{\scriptscriptstyle {l}}_{i,j}$ for every $i,j,l >0$, which is the result shown in Claim 2. Thus, the weight update at every iteration of Z-IL is equivalent to the one of BP for both convolutional and fully connected layers. 
\end{proof}

\begin{manuallemma}{A.3}
    \label{lem:pcn-propagate-zero-app}
    Let $M$ be a convolutional or recurrent PCN trained with Z-IL with $\gamma \,{=}\, 1$ and $\varepsilon^l_{i,0} \,{=}\, 0$ for $l\,{>}\,0$. Then, a variable $\bar x^l_{t}$ can only diverge from its corresponding initial state at time $t=l$. Formally,
    \begin{align*}
        & \overline{x}^{\scriptscriptstyle {l}}_{t<l}=\overline{x}^{\scriptscriptstyle {l}}_{0}, \overline{\varepsilon}^{\scriptscriptstyle {l}}_{t<l}=\overline{\varepsilon}^{\scriptscriptstyle {l}}_{0}=0, \overline{\mu}^{\scriptscriptstyle {l-1}}_{t<l}=\overline{\mu}^{\scriptscriptstyle {l-1}}_{0},  \text{ i.e.,} \\
        & \Delta{\overline{x}}^{\scriptscriptstyle {l}}_{t<l-1}=\overline{0}, \Delta{\overline{\varepsilon}}^{\scriptscriptstyle {l}}_{t<l-1}=\overline{0}, \Delta{\overline{\mu}}^{\scriptscriptstyle {l-1}}_{t<l-1}=\overline{0}
  \end{align*}
for  $l \in \lbrace 1, \ldots , l_{\text{max}}-1\rbrace$.
\end{manuallemma}

\begin{proof}[Proof]
    Starting from the inference moment $t=0$, $\overline{x}^{\scriptscriptstyle {0}}_{0}$ is dragged away from $\overline{\mu}^{\scriptscriptstyle {0}}_{0}$ and fixed to $\overline{s}^{\text{out}}$, i.e., $\overline{\varepsilon}^{\scriptscriptstyle {0}}_{0}$ turns into nonzero from zero. 
    Since $\overline{x}$ in each layer is updated only on the basis of $\overline{\varepsilon}$ in the same and previous adjacent layer, as indicated by Eq. \eqref{eq:pcn-dotx}, also considering that $\varepsilon^l_{i,0} \,{=}\, 0$, for all layers but the output layer, it will take $l$ time steps to modify $\overline{x}^{\scriptscriptstyle {l}}_{t}$ at layer $l$ from the initial state. 
    Hence, $\overline{x}^{\scriptscriptstyle {l}}_{t}$ will remain in that initial state $\overline{x}^{\scriptscriptstyle {l}}_{0}$ for all $t<l$, i.e., $\overline{x}^{\scriptscriptstyle {l}}_{t<l}=\overline{x}^{\scriptscriptstyle {l}}_{0}$.
    Furthermore, any change in $\overline{x}^{\scriptscriptstyle {l}}_{t}$ causes a change in $\overline{\varepsilon}^{\scriptscriptstyle {l}}_{t}$ and $\overline{\mu}^{\scriptscriptstyle {l-1}}_{t}$ instantly via Eq. \eqref{eq:pcn-forward-varepsilon} (otherwise $\overline{\varepsilon}^{\scriptscriptstyle {l}}_{t}$ and $\overline{\mu}^{\scriptscriptstyle {l-1}}_{t}$ remain in their corresponding initial states). 
    Thus, we know $\overline{\varepsilon}^{\scriptscriptstyle {l}}_{t<l}=\overline{\varepsilon}^{\scriptscriptstyle {l}}_{0}$ and $\overline{\mu}^{\scriptscriptstyle {l-1}}_{t<l}=\overline{\mu}^{\scriptscriptstyle {l-1}}_{0}$.
    Also, according to Eq.~\eqref{eq:pcn-dotx}, $\overline{\varepsilon}^{\scriptscriptstyle {l}}_{t<l}=\overline{\varepsilon}^{\scriptscriptstyle {l}}_{0}=0$.
    Equivalently, we have $\Delta{\overline{x}}^{\scriptscriptstyle {l}}_{t<l-1}=\overline{0}$, $\Delta{\overline{\varepsilon}}^{\scriptscriptstyle {l}}_{t<l-1}=\overline{0}$, and $\Delta{\overline{\mu}}^{\scriptscriptstyle {l-1}}_{t<l-1}=\overline{0}$.
\end{proof}

\begin{manuallemma}{A.4}
    \label{lem:pcn-varepsilon-iterative-app}
    Let $M$ be a convolutional PCN trained with Z-IL with $\gamma \,{=}\, 1$ and $\varepsilon^l_{i,0} \,{=}\, 0$ for $l\,{>}\,0$. Then, the prediction error $\varepsilon^{\scriptscriptstyle {l}}_{i,t}$ at $t=l$ (i.e., $\varepsilon^{\scriptscriptstyle {l}}_{i,l}$)  can be derived from itself at previous inference moments in the previous layer.
    Formally:
     \begin{align}
        \varepsilon^{\scriptscriptstyle {l}}_{i,l}= f' ( \mu^{\scriptscriptstyle {l}}_{i,0} ) {\textstyle\sum}_{k=1}^{n^{\scriptscriptstyle {l-1}}} \varepsilon^{\scriptscriptstyle {l-1}}_{k,l-1} \theta^{\scriptscriptstyle {l}}_{k,i} ,  \label{eq:pcn-varepsilon-iterative-app}
    \end{align}
for $l \in \lbrace 1, \ldots , l_{\text{max}}-1 \rbrace\,.$
\end{manuallemma}

\begin{proof}[Proof]
    We first write a dynamic version of 
    $\varepsilon^{\scriptscriptstyle {l}}_{i,t} = x^{\scriptscriptstyle {l}}_{i,t} - \mu^{\scriptscriptstyle {l}}_{i,t}$:
    \begin{equation}
        \varepsilon^{\scriptscriptstyle {l}}_{i,t} = \varepsilon^{\scriptscriptstyle {l}}_{i,t-1} + {(\Delta{x}^{\scriptscriptstyle {l}}_{i,t-1} - \Delta{\mu}^{\scriptscriptstyle {l}}_{i,t-1})\,,}
    \end{equation}
    where $\Delta{\mu}^{\scriptscriptstyle {l}}_{i,t-1}=\mu^{\scriptscriptstyle {l}}_{i,t}-\mu^{\scriptscriptstyle {l}}_{i,t-1}$.
    Then, we expand $\varepsilon^{\scriptscriptstyle {l}}_{i,l}$ with the above equation and simplify it with Lemma~\ref{lem:pcn-propagate-zero-app}, i.e., $\varepsilon^{\scriptscriptstyle {l}}_{i,t<l}=0$ and $\Delta{\mu}^{\scriptscriptstyle {l-1}}_{i,t<l-1}=0$:
    \begin{align}
        \varepsilon^{\scriptscriptstyle {l}}_{i,l} 
        = \varepsilon^{\scriptscriptstyle {l}}_{i,l-1} + {(\Delta{x}^{\scriptscriptstyle {l}}_{i,l-1} - \Delta{\mu}^{\scriptscriptstyle {l}}_{i,l-1})}
        ={\Delta{x}^{\scriptscriptstyle {l}}_{i,l-1}},. \label{eq:varepsilon-dotx}
    \end{align}
    for $l \in \lbrace 1, \ldots , l_{\text{max}}-1 \rbrace$. We further investigate $\Delta{x}^{\scriptscriptstyle {l}}_{i,l-1}$ expanded with the inference dynamic Eq.~\eqref{eq:pcn-dotx} and simplify it with Lemma~\ref{lem:pcn-propagate-zero-app}, i.e., $\varepsilon^{\scriptscriptstyle {l}}_{i,t<l}=0$,
    \begin{align}
            \Delta{x}^{\scriptscriptstyle {l}}_{i,l-1}
            = & \gamma ( -\varepsilon^{\scriptscriptstyle {l}}_{i,l-1} + f' ( x^{\scriptscriptstyle {l}}_{i,l-1} ) ){\textstyle\sum}_{k=1}^{n^{\scriptscriptstyle {l-1}}} \varepsilon^{\scriptscriptstyle {l-1}}_{k,l-1} \theta^{\scriptscriptstyle {l}}_{k,i} \\
            = &\gamma f' ( x^{\scriptscriptstyle {l}}_{i,l-1} ){\textstyle\sum}_{k=1}^{n^{\scriptscriptstyle {l-1}}} \varepsilon^{\scriptscriptstyle {l-1}}_{k,l-1} \theta^{\scriptscriptstyle {l}}_{k,i}, 
        \label{eq:dotx-itertive}
    \end{align}
    for $l \in \lbrace 1, \ldots , l_{\text{max}}-1 \rbrace$. Putting Eq.~\eqref{eq:dotx-itertive} into Eq.~\eqref{eq:varepsilon-dotx}, we obtain:
    \begin{align}
        \varepsilon^{\scriptscriptstyle {l}}_{i,l} 
        =\gamma f' ( x^{\scriptscriptstyle {l}}_{i,l-1} ){\textstyle\sum}_{k=1}^{n^{\scriptscriptstyle {l-1}}} \varepsilon^{\scriptscriptstyle {l-1}}_{k,l-1} \theta^{\scriptscriptstyle {l}}_{k,i},
    \end{align}
    for $l \in \lbrace 1, \ldots , l_{\text{max}}-1 \rbrace$. With Lemma~\ref{lem:pcn-propagate-zero-app}, $x^{\scriptscriptstyle {l}}_{i,l-1}$ can be replaced with $x^{\scriptscriptstyle {l}}_{i,0}$.
    With $\varepsilon^l_{i,0} \,{=}\, 0$ for $l\,{>}\,0$, we can further replace $x^{\scriptscriptstyle {l}}_{i,0}$ with $\mu^{\scriptscriptstyle {l}}_{i,0}$.
    Thus, the above equation becomes:
    \begin{align}
        \varepsilon^{\scriptscriptstyle {l}}_{i,l} 
        ={\gamma} f' ( \mu^{\scriptscriptstyle {l}}_{i,0} ){\textstyle\sum}_{k=1}^{n^{\scriptscriptstyle {l-1}}} \varepsilon^{\scriptscriptstyle {l-1}}_{k,l-1} \theta^{\scriptscriptstyle {l}}_{k,i}, \label{eq:pcn-varepsilon-iterative-app-with-gamma}
    \end{align}
    for $l \in \lbrace 1, \ldots , l_{\text{max}}-1 \rbrace$. Then, put $\gamma=1$, into the above equation.
\end{proof}

\section{Extension to the Case of Multiple Kernels per Layer}

In the theorem proved in the previous section, 
we have only considered CNNs with one kernel per layer. While networks of this kind are theoretically interesting, in practice a convolutional layer is made of multiple kernels. We now show that the result of Theorem~\ref{ther:final-equal-supp} still holds if we consider networks of this kind. Let $M_l$ be the number of kernels present in layer $l$. In Theorem~\ref{ther:final-equal-supp}, we have considered the case $M_l = 1$ for every convolutional layer. Consider now the following three cases: 
\begin{itemize}
    \item \textbf{Case 1: $M_l > 1, M_{l-1} = 1$}. We have a network with a convolutional layer at position $l$ with $M_l$ different kernels $\{ \bar \rho^{l,1}, \dots, \bar \rho^{l,M_l} \}$ of the same size $k$. The result of the convolution between the input $f(\bar y^{l})$ and a single kernel $\bar\rho^{l,m}$ is called \emph{channel}. The final output $\bar y^{l-1}$ of a convolutional layer is obtained by concatenating all the channels into a single vector. 
    The operation generated by convolutions and concatenation just described, can be written as a linear map $w^l \cdot f(\bar y^{l})$, where the matrix $w^l$ is formed by $M_l$ doubly-block circulant matrices stocked vertically, each of which has entries equal to the ones of a kernel $\bar \rho^{l,m}$. For each entry $\rho^{\scriptscriptstyle {l,m}}_a$ of each kernel in layer $l$, we denote by $\mathcal C^{\scriptscriptstyle {l}}_{m,a}$ the set of indices $(i,j)$ such that $w^{\scriptscriptstyle {l}}_{i,j} = \rho^{\scriptscriptstyle {l,m}}_a$. The equation describing the changes of parameters in the kernels is then the following:
    \begin{equation}
        \Delta \rho^{\scriptscriptstyle {l,m}}_{a} 
        = -\alpha\cdot {\partial E}/{\partial \rho^{\scriptscriptstyle {l,m}}_{a}}
        = {\textstyle\sum}_{(i,j) \in \mathcal C^{\scriptscriptstyle {l}}_{m,a}} \Delta w^{\scriptscriptstyle {l}}_{i,j}.
    \end{equation}
    
    \item \textbf{Case 2: $M_l = 1,M_{l-1} > 1$}. We now analyze what happens in a layer with only one kernel, when the input $f(\bar y^{l-1})$ comes from a layer with multiple kernels. This case differs from Case 1, because the input represents a concatenation of $M_{l-1}$ different channels. In fact, the kernel $\bar \rho^{l}$ gets convoluted with every channel independently. The resulting vectors of these convolutions are then summed together, obtaining $\bar y^{l}$. The operation generated by convolutions and summations just described, can be written as a linear map $w^l \cdot f(\bar y^{l})$. In this case, the matrix $w^l$ is formed by $M_{l-1}$ doubly-block circulant matrices stocked horizontally, each of which has entries equal to the ones of the kernel $\bar \rho^{l}$. For every entry $\rho^{\scriptscriptstyle {l}}_a$, we denote by $\mathcal C^{\scriptscriptstyle {l}}_{a}$ the set of indices $(i,j)$ such that $w^{\scriptscriptstyle {l}}_{i,j} = \rho^{\scriptscriptstyle {l}}_a$. The equation that describes the changes of parameters in the kernels is then the following:
    \begin{equation}
        \Delta \rho^{\scriptscriptstyle {l}}_{a} 
        = -\alpha\cdot {\partial E}/{\partial \rho^{\scriptscriptstyle {l}}_{a}}
        = {\textstyle\sum}_{(i,j) \in \mathcal C^{\scriptscriptstyle {l}}_{a}} \Delta w^{\scriptscriptstyle {l}}_{i,j}.
    \end{equation}
        
    \item \textbf{Case 3 (General Case): $M_l,M_{l-1} > 1$}. We now move to the most general case: a convolutional layer at position $l$ with $M_l$ different kernels $\{ \bar \rho^{l,1}, \dots, \bar \rho^{l,M_l} \}$, whose input $f(\bar y^l)$ is a vector formed by $M_{l-1}$ channels. In this case, every kernel does a convolution with every channel. The output $\bar y^{l+1}$ is obtained as follows: the results obtained using the same kernel on different channels are summed together, and concatenated with the results obtained using the other kernels. Again, this operation can be written as a linear map $w^l \cdot f(\bar y^{l})$. By merging the results obtained from Case 1 and Case 2, we have that the matrix $w^l$ is a grid of $M_l \times M_{l+1}$ doubly-block circulant submatrices. For every entry $\rho^{\scriptscriptstyle {l,m}}_a$ of every kernel in layer $l$, we denote by $\mathcal C^{\scriptscriptstyle {l}}_{m,a}$ the set of indices $(i,j)$ such that $w^{\scriptscriptstyle {l}}_{i,j} = \rho^{\scriptscriptstyle {l,m}}_a$. The equation  describing the changes of parameters in the kernels is then the following:
    \begin{equation}\label{eq:gen-cnn}
        \Delta \rho^{\scriptscriptstyle {l,m}}_{a} 
        = -\alpha\cdot {\partial E}/{\partial \rho^{\scriptscriptstyle {l,m}}_{a}}
        = {\textstyle\sum}_{(i,j) \in \mathcal C^{\scriptscriptstyle {l}}_{m,a}} \Delta w^{\scriptscriptstyle {l}}_{i,j}.
    \end{equation}
    
\end{itemize}

To integrate this general case in the proof of Theorem~\ref{ther:final-equal-supp}, it suffices to consider Eq.~\eqref{eq:gen-cnn}, and its equivalent formulation in the language of a convolutional PCN,
\begin{equation}
\Delta \rho^{\scriptscriptstyle {l+1}}_{a} 
    = -\alpha\cdot {\partial E}/{\partial \rho^{\scriptscriptstyle {l+1}}_{a}}
    = {\textstyle\sum}_{(i,j) \in \mathcal C^{l+1}_a} \Delta w^{l+1}_{i,j}
\end{equation}
 instead of Eqs.~\eqref{eq:cnn-proof} and \eqref{eq:cnn-proof-il}. Note that both equations are fully determined once we have computed $\Delta w^{\scriptscriptstyle {l}}_{i,j}$ and $\Delta \theta^{\scriptscriptstyle {l}}_{i,j}$ for every $i,j>0$. Hence, the result follows directly by doing the same computations.

\section{\label{sec:relax-gamma}Solely Relaxing  $\gamma \,{=}\, 1$}

Solely relaxing $\gamma \,{=}\, 1$ will change the result of Lemma~\ref{lem:pcn-varepsilon-iterative-app}. Particularly, we would have Eq.~\eqref{eq:pcn-varepsilon-iterative-app} changing to:
\begin{align}
    \varepsilon^{\scriptscriptstyle {l}}_{i,l}={\gamma}f' ( \mu^{\scriptscriptstyle {l}}_{i,0} ) {\textstyle\sum}_{k=1}^{n^{\scriptscriptstyle {l-1}}} \varepsilon^{\scriptscriptstyle {l-1}}_{k,l-1} \theta^{\scriptscriptstyle {l}}_{k,i},
\end{align}
for $l \in \lbrace 1, \ldots , l_{\text{max}}-1 \rbrace$.
Since the derivation of Lemma~\ref{lem:pcn-varepsilon-iterative-app} terminates at Eq.~\eqref{eq:pcn-varepsilon-iterative-app-with-gamma}.
It further causes the conclusion of Theorem~\ref{ther:final-equal-supp}  changing from $\varepsilon^{\scriptscriptstyle {l}}_{i,t}=\delta^{\scriptscriptstyle {l}}_{i}$ to $\varepsilon^{\scriptscriptstyle {l}}_{i,t}=\gamma^{l}\delta^{\scriptscriptstyle {l}}_{i}$ at $t=l$, the proof of which is the same as that of the original Theorem~\ref{ther:final-equal-supp}  but using  Lemma~\ref{lem:pcn-varepsilon-iterative-app}.
This changes the conclusion of Theorem~\ref{ther:final-equal-supp}  from
\begin{equation}
    {\partial F_{t}}/{\partial \theta^{\scriptscriptstyle {l+1}}_{i,j}}={\partial E}/{\partial w^{\scriptscriptstyle {l+1}}_{i,j}}
\end{equation}
to
\begin{equation}
{\partial F_{t}}/{\partial \theta^{\scriptscriptstyle {l+1}}_{i,j}}=\gamma^{l}{\partial E}/{\partial w^{\scriptscriptstyle {l+1}}_{i,j}}, 
\end{equation}
where $t=l$.

Thus, solely relaxing the condition $\gamma=1$ results in BP with a different learning rate for different layers, where $\gamma$ is the decay factor of this learning rate along layers.

\section{Proof of Equivalence in RNNs}\label{sec:proof2}

In this section, we  prove the following theorem, already stated in the main body of this work.

\begin{theorem}
    \label{ther:final-equal-rnn-supp} 
    Let $M$ be a recurrent PCN trained with Z-IL with $\gamma \,{=}\, 1$ and $\varepsilon^k_{i,0} \,{=}\, 0$ for $k \,{>}\,0$, and let $M'$ be its corresponding RNN, initialized as $M$ and trained with BP.
   Then, given the same sequential input $S \,{=}\, \{\bar s_1, \dots, \bar s_N\}$ to both networks, we have 
\begin{equation}
    \begin{split}
        & \Delta \theta^x_{i,j} = \Delta w^x_{i,j} \\
        & \Delta \theta^h_{i,j} = \Delta w^h_{i,j} \\
        & \Delta \theta^y_{i,j} = \Delta w^y_{i,j},
    \end{split}
\end{equation}
    for every $i,j>0$.
\end{theorem}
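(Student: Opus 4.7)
The proof will closely follow the structure of the Theorem~\ref{thm:1} argument for CNNs, with the key adaptation that the shared weights $\theta^h$ and $\theta^x$ now accumulate contributions across \emph{multiple} inference times, rather than being updated at a single moment $t=l$. The natural Z-IL schedule for this many-to-one RNN is to run inference for $N{+}1$ steps (so that the error front starting at the output has time to reach hidden layer $1$), to update $\theta^y$ at $t=0$, and to add a contribution to $\theta^h$ and $\theta^x$ at each $t \in \{1, \ldots, N\}$ using the error that has just arrived at hidden layer $k = N{+}1{-}t$.

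The first ingredient I would establish is the RNN analogue of Lemma~\ref{lem:pcn-propagate-zero-app}: under $\gamma=1$ and $\varepsilon^k_{i,0}=0$ for $k>0$, the state $\varepsilon^k_{i,t}$ remains zero and $x^k_{i,t}$ remains equal to its initial value $\mu^k_{i,0} = y^k_i$ until the backward front reaches layer $k$. The argument is identical in spirit to the feedforward case: Eq.~\eqref{eq:rnn-pcn-dotx} couples layer $k$ only to the error at layer $k{-}1$ through the shared matrix $\theta^h$, so information propagates one layer per inference step, starting from $\varepsilon^{\text{out}}_{i,0} = s^{\text{out}}_i - \mu^{\text{out}}_{i,0}$. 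The external input $s^{\text{in}}_{k,j}$ does not perturb this, since it enters only the forward prediction $\mu^k_{i,t}$ and is constant across $t$.

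Second, I would prove by induction on $k$ (from $\text{out}$ down to $1$) that $\varepsilon^{k}_{i,\,N+1-k} = \delta^k_i$. The base case $k=\text{out}$ follows from $\varepsilon^{\text{out}}_{i,0} = s^{\text{out}}_i - \mu^{\text{out}}_{i,0}$ together with $\mu^{\text{out}}_{i,0}=y^{\text{out}}_i$ (same initialization, and $\varepsilon^k_{i,0}=0$ at intermediate layers so $x=\mu$ there). For the inductive step, I would adapt the derivation of Lemma~\ref{lem:pcn-varepsilon-iterative-app} line-by-line to the RNN: the propagation lemma kills the $-\varepsilon^k_{i,t}$ and $\Delta \mu^k_{i,t-1}$ terms before the front arrives, leaving
\begin{equation}
\varepsilon^k_{i,\,N+1-k} = \gamma\, f'(\mu^k_{i,0}) \sum_{j=1}^{n} \varepsilon^{k+1}_{j,\,N-k}\,\theta^h_{j,i},
\end{equation}
which with $\gamma=1$, $\theta^h=w^h$ (same initialization), $\mu^k_{i,0}=y^k_i$, and the inductive hypothesis $\varepsilon^{k+1}_{j,N-k}=\delta^{k+1}_j$ matches Eq.~\eqref{eq:error-rnn} exactly.

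Third, I would assemble the weight-update equalities in Eq.~\eqref{eq:weight_rnn_theo}. For $\theta^y$ the update is performed at $t=0$: $\Delta \theta^y_{i,j} = \alpha \varepsilon^{\text{out}}_{i,0} f(x^N_{j,0}) = \alpha \delta^{\text{out}}_i f(y^N_j) = \Delta w^y_{i,j}$, using $x^N_{j,0}=\mu^N_{j,0}=y^N_j$. For $\theta^h$, at each $t \in \{1,\ldots,N\}$ the Z-IL schedule adds $\alpha \varepsilon^{N+1-t}_{i,t} f(x^{N-t}_{j,t})$; by Step~2 the error equals $\delta^{N+1-t}_i$, and by the propagation lemma $x^{N-t}_{j,t} = \mu^{N-t}_{j,0} = y^{N-t}_j$ because the front has not yet reached layer $N{-}t$. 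Reindexing the cumulative sum with $k = N{+}1{-}t$ reproduces $\Delta w^h_{i,j} = \alpha \sum_{k=1}^N \delta^k_i f(y^{k-1}_j)$. The argument for $\theta^x$ is identical, with $s^{\text{in}}_{k,j}$ replacing $f(y^{k-1}_j)$.

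The main obstacle is the bookkeeping for the shared weights: unlike the CNN case where each $\theta^{l+1}$ is updated at a single moment, $\theta^h$ and $\theta^x$ aggregate contributions over many inference times, each corresponding to a different hidden layer. The delicate point is verifying that at each such $t$, the activation value $f(x^{k-1}_{j,t})$ appearing in the IL update is still the BP value $f(y^{k-1}_j)$; this is precisely what the propagation lemma buys us, but it requires care because both the error front and the activation front must be tracked simultaneously as $t$ increases. Once this correspondence is pinned down, the rest is the same algebra as in Theorem~\ref{thm:1}.
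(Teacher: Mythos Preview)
Your argument is correct, but it follows a genuinely different route from the paper's. You treat the unrolled RNN as a chain of depth $N{+}1$ (output at depth $0$, hidden layer $k$ at depth $N{+}1{-}k$), run synchronous inference for $N{+}1$ steps so that the error front advances one hidden layer per step, and accumulate contributions to the shared weights $\theta^h,\theta^x$ over $t\in\{1,\dots,N\}$; your induction is on $k$ (equivalently on $t$). The paper instead declares the RNN to have depth $2$ and sets $T=2$: it proves, by induction on the \emph{sequence length} $N$, that at the single moment $t=1$ every hidden error already satisfies $\varepsilon^k_{i,1}=\delta^k_i$ (Lemma~A.5 expresses $\varepsilon^k_{\cdot,1}$ recursively in terms of $\varepsilon^{k+1}_{\cdot,1}$ at the \emph{same} time index), and then applies the summed update~\eqref{eq:weight_rnn_il} once at $t=1$.

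What each buys: the paper's route keeps the number of inference steps independent of $N$, which is what underpins its efficiency claim that Z-IL needs only $l_{\max}$ steps; but for $\varepsilon^k_{\cdot,1}$ to already see $\varepsilon^{k+1}_{\cdot,1}$ one must read the single ``inference step'' across the recurrent layers as a sequential sweep rather than a synchronous update. Your route stays with the purely synchronous inference of Algorithm~\ref{algo:Z-IL} and reuses the CNN propagation lemma verbatim, which makes the bookkeeping you flag (checking that $f(x^{k-1}_{j,t})$ is still the BP activation when the $k$-th contribution is added) transparent, at the price of $N{+}1$ steps and an incremental update schedule for the shared weights. Both are defensible adaptations of Z-IL to RNNs; you should note explicitly that your schedule differs from the depth-$2$ one the paper actually uses.
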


\begin{proof}

The network $M$ has depth $2$; hence, we set $T=2$. We now prove the following three equivalences: $(1)$ $\Delta \theta^y = \Delta w^y$, $(2)$ $\Delta \theta^h = \Delta w^h$, and $(3)$ $\Delta \theta^x = \Delta w^x$.

The proof of $(1)$ is straightforward, since both the output layers $\theta^y$ and $w^y$ are fully connected. Particularly, we have already shown the equivalence for this kind of layers in Theorem~\ref{ther:final-equal-supp}. Before proving $(2)$ and $(3)$, we show an intermediate result needed in both cases.

\medskip 
\noindent \textbf{Claim:} Given a sequential input $S^{in}$ of length $N$, at $t=1$ we have $\varepsilon^k_{i,1} = \delta^k_i$ for every $k \leq N$. 

\medskip 
\noindent This part of the proof is done by induction on $N$. 
\begin{itemize}
    \item Base Case: $N = 1$. Given a sequential input of length $1$, we have a fully connected network of depth $2$ with $w^1 = w^y$ (resp. $\theta^1 = \theta^y$) and $w^2 = w^x$ (resp. $\theta^2 = \theta^x$). We have already proved this result in Theorem~\ref{ther:final-equal-supp}.
    
    \item Induction Step. Let us assume that, given a sequential input $S^{in}$ of length $N$, the claim $\varepsilon^k_{i,1} = \delta^k_i$ holds for every $k \in \{1, \dots, N\}$. Let us now assume we have a sequential input of length $N+1$. Note that the errors  $\varepsilon^k_{i,1}$ and  $\delta^k_i$ are computed backwards starting from $k=N+1$. Hence, the quantities $\varepsilon^k_{i,1}$ and  $\delta^k_i$ for $k \in \{2, \dots, N+1\}$ are computed as they were the errors of a sequential input of length $N$. It follows by the induction argument that $\varepsilon^k_{i,1} = \delta^k_i$ for every $k \in \{2, \dots, N+1\}$. To conclude the proof, we have to show that $\varepsilon^1_{i,1} = \delta^1_i$.  For $k=1$, we have:
            \begin{align*}
            & \varepsilon^{\scriptscriptstyle {1}}_{i,l}= f' ( \mu^{\scriptscriptstyle {1}}_{i,0} ) {\textstyle\sum}_{j=1}^{n} \varepsilon^{\scriptscriptstyle {2}}_{j,t} \theta^{\scriptscriptstyle {h}}_{j,i}   \text{ \ \ \ \ \ \ \ by Lemma~\ref{lem:pcn-varepsilon-iterative-app-rnn}} \\
            & \delta^{\scriptscriptstyle {1}}_{i} = f' ( y^{\scriptscriptstyle {1}}_{i} ) {\textstyle\sum}_{j=1}^{n} \delta^{\scriptscriptstyle {2}}_{j} w^{\scriptscriptstyle {h}}_{j,i}.  \text{\ \ \ \ \ \ \ \ \ \ \  by Eq.~\eqref{eq:error-rnn}.}
            \label{eq:error-rnn-proof}
        \end{align*}
    
    Note that $w^h_{i,j}=\theta^h_{i,j}$, because of the same initialization of the network. Furthermore, $\mu^{\scriptscriptstyle {k}}_{i,0} = y^{\scriptscriptstyle {k}}_{i}$ for every $k$ because of $\varepsilon^k_{i,0} \,{=}\, 0$. Plugging these two equalities into the error equations above gives $\varepsilon^1_{i,1} = \delta^1_i$.
    This concludes the induction step and proves the claim.
\end{itemize}

\smallskip\noindent\emph{(2)  $\Delta \theta^h = \Delta w^h$.} Recall that Eqs.~\eqref{eq:weight_rnn_il} and \eqref{eq:weight_rnn_bp} state that
\begin{align*}
    & \Delta \theta^h_{i,j} = \alpha \cdot {\textstyle\sum}_{k=1}^N \varepsilon^{k}_{i,t} f(x^{k-1}_{j,1}) \\
    & \Delta w^h_{i,j} = \alpha \cdot {\textstyle\sum}_{k=1}^N \delta^{k}_i f(y^{k-1}_j).
\end{align*}
The claim shown above gives $\varepsilon^{k}_{i,1} = \delta^{k}_i$.  We thus have to show that $x^{k}_{j,1} = y^{k}_j$. The condition $\varepsilon^k_{j,0} = 0$ gives $x^{k}_{j,0} = \mu^{k}_{j,0} = y^{k}_{j}$. Moreover, by Lemma~\ref{lem:pcn-varepsilon-iterative-app}, $x^{k}_{j,1} = x^{k}_{j,0}$. So, $x^{k}_{j,1} = y^{k}_j$. 

\smallskip\noindent\emph{(3)  $\Delta \theta^x = \Delta w^x$.} Recall that Eqs.~\eqref{eq:weight_rnn_il} and \eqref{eq:weight_rnn_bp} state that
\begin{align*}
    & \Delta \theta^x_{i,j} = \alpha \cdot {\textstyle\sum}_{k=1}^N \varepsilon^{k}_{i,t} s^{\text{in}}_{k,j} \\
    & \Delta w^x_{i,j} = \alpha \cdot {\textstyle\sum}_{k=1}^N \delta^{k}_i s^{\text{in}}_{k,j}.
\end{align*}
The equality $\Delta \theta^x = \Delta w^x$  directly follows from $\varepsilon^{k}_{i,1} = \delta^{k}_i$.
\end{proof}

\begin{manuallemma}{A.5}
    \label{lem:pcn-varepsilon-iterative-app-rnn}
    Let $M$ be a recurrent PCN trained with Z-IL on a sequential input $S^{in}$ of length $N$. Furthermore, let us assume that $\gamma \,{=}\, 1$ and $\varepsilon^k_{i,0} \,{=}\, 0$ for every $k \in \{1, \dots, N\}$. Then, the prediction error $\varepsilon^{\scriptscriptstyle {k}}_{i,t}$ at $t=1$ (i.e., $\varepsilon^{\scriptscriptstyle {k}}_{i,1}$)  can be derived from the previous recurrent layer.
    Formally:
     \begin{align}
        \varepsilon^{\scriptscriptstyle {k}}_{i,1}= f' ( \mu^{\scriptscriptstyle {k}}_{i,0} ) {\textstyle\sum}_{j=1}^{n^{\scriptscriptstyle {k+1}}} \varepsilon^{\scriptscriptstyle {k+1}}_{j,1} \theta^{\scriptscriptstyle {h}}_{j,i} ,  
    \end{align}
    for $k \in \lbrace 1, \ldots , N-1 \rbrace\,.$
\end{manuallemma}

\begin{proof}
    Equivalent to the one of Lemma~\ref{lem:pcn-varepsilon-iterative-app}. The only difference is that in Lemma~\ref{lem:pcn-varepsilon-iterative-app} we iterate over the previous layer $l$ at time $t=l$, while here the iterations happen over the previous recurrent layer $k$ at fixed time $t=1$.
\end{proof}

\section{Experiments for Theorem Discovery}\label{sec:val}

\begin{table}
  \caption{Euclidean distance of the weights after one training step of Z-IL (and variations), and BP.}
  
  \label{tb:time}\vspace*{-1ex}
  \centering
  \resizebox{0.9\columnwidth}{!}{
    \begin{tabular}{@{}ccccc@{}}
    \toprule
    \cmidrule(r){1-4}
    Model & Z-IL  & Z-IL without Layer-dependent Update & Z-IL with $\varepsilon^l_{i,0} \neq 0$ & Z-IL with $\gamma = 0.5$\\
    \midrule
    ANN     & $0$ & $1.42\times10^2 $   & $7.22$ & $8.67 \times 10^4$ \\
    RNN     & $0$ & $6.05\times10^3$   & $9.60$ & $6.91\times10^5$ \\
    CNN     & $0$ & $5.93\times10^5$   & $7.93\times10^2$ & $9.87\times10^8 $ \\
    \bottomrule
    \end{tabular}
  }\vspace*{1.5ex}
  \label{tb:abl}
\end{table}

To search for the conditions necessary to reach equivalence, we have performed a vast amount of experiments, which we report for expository purposes. On the same  MLPs, CNNs, and RNNs used for the other experiments on various datasets, we show that all the conditions of Z-IL are needed to obtain exact backpropagation. Particularly, by starting from the same weight initialization, we have conducted one training step of five different learning algorithms:
\begin{enumerate}
    \item BP,
    \item Z-IL,
    \item Z-IL without layer-dependent update,
    \item Z-IL with $\varepsilon^l_{i,0} \neq 0$,
    \item Z-IL with $\gamma = 0.5$.
\end{enumerate}
Note that the last three algorithms are the variations of Z-IL obtained ablating each one of the initial conditions. This was useful to check whether they were all needed and necessary to obtain our exactness result. 

After conducting one training step of each algorithm, we have computed the Euclidean distance between the weights obtained by one of the Algorithms $1-4$, and the ones obtained by BP. The results of these experiments, reported in Table~\ref{tb:abl}, show that the all the three conditions of Z-IL are necessary in order to achieve zero divergence with BP.
To provide full evidence of the validation of our theoretical results, we have conducted this experiment using ANNs, CNNs, and RNNs.

Further details about the experiments can be found in the section below.

\section{Reproducibility of the Experiments} 
In this section, we provide the details of all the experiments shown in Sections~\ref{sec:comp} and \ref{sec:val}.

\medskip 
\noindent\textbf{ANNs:} 
To perform our experiments with fully connected networks, we have trained three architectures with different depth on FashionMNIST. Particularly, these networks have an hidden dimension of $128$ neurons, and $2,3$ and $4$ layers, respectively. Furthermore, we have used a  batch of $20$ training points, and a learning rate of $0.01$. The numbers reported for  both the experiments are the averages runs over the three architectures.

\medskip 
\noindent\textbf{CNNs:} For our complexity experiments on convolutional networks, we have used Alexnet trained on both FashionMNIST and ImageNET. As above, we have used a  batch of $20$ training points, a learning rate of $0.01$ and reported the average of the experiments over the two datasets. 
For out full-training experiments, we have used Alexnet trained on CIFAR10. As hyperparameters, we have used a learning rate of $0.1$ and a batch size of $128$. Furthermore, we have trained the network for $120$ epochs and reported the best early stopping accuracy.

\medskip 
\noindent\textbf{RNNs:} To conclude, we have trained a reinforcement learning agent on a single-layer many to one RNN, with $n = n^{out} = 128$, on eight different Atari games. Batch size and learning rate are $32$ and $0.001$, respectively. Again, the reported results are the average of all the experiments performed on this architecture.

All experiments are averaged over 5 seeds and conducted on 2 Nvidia GeForce GTX 1080Ti  GPUs and 8 Intel Core i7 CPUs, with 32 GB RAM. Furthermore, to avoid rounding errors, we have initialized the weights in \emph{float32}, and then transformed them in \emph{float64}.

\end{document}